\documentclass[referee,sn-basic]{sn-jnl}

\usepackage{comment}
\usepackage{booktabs} 
\usepackage{graphicx} 
\usepackage{url}      
\usepackage{amsmath}  
\usepackage{amsthm}
\usepackage{multirow}
\usepackage{longtable}
\usepackage{float}
\theoremstyle{thmstyletwo}
\newtheorem{theorem}{Theorem}
\newtheorem{proposition}{Proposition}
\usepackage{breakurl}

\title{
Directional Evidence Guided Search-Space Reduction for Exact DAG Learning}

\author{\fnm{Upala Junaida} \sur{Islam}}
\email{uislam@asu.edu}
\author{\fnm{Abdelmonem} \sur{Elrefaey}}
\email{aelrefaey@asu.edu}
\author*{\fnm{Rong} \sur{Pan}}
\email{rong.pan@asu.edu}

\affil{
  \orgdiv{School of Computing and Augmented Intelligence},
  \orgname{Arizona State University},
  \city{Tempe},
  \state{AZ}
  \postcode{85281},
  \country{USA}}

\usepackage{setspace}
\usepackage{amsmath, amssymb}
\usepackage{xcolor}
\usepackage{soul}
\usepackage{natbib}
\usepackage[all]{nowidow}
\usepackage{algorithm2e}
\RestyleAlgo{ruled}
\allowdisplaybreaks

\begin{document}


\abstract{
Learning a directed acyclic graph (DAG) from observational data is a challenging combinatorial problem due to the exponential growth in the number of candidate parent-set configurations. Existing exact score-based methods often require computationally intensive combinatorial search, whereas constraint-based methods can become unreliable or computationally demanding as graph size and conditioning-set complexity increase. We develop a non-parametric hybrid framework, referred to as DECO (Directional Evidence-guided Configuration Optimization), that extracts dependency and directional evidence from observation data to construct admissible parent sets prior to exact optimization.
It reduces the optimization search space by eliminating empirically unsupported parent configurations while preserving flexibility for all plausible edge orientations. 
Theoretical analysis establishes an exponential reduction in the admissible parent-set configuration space and quantifies how bounded edge-level omission affects the probability of retaining the true parent structure.
Experiments on benchmark Bayesian networks and synthetic discrete and continuous DAGs demonstrate substantial search-space reduction while achieving competitive structure-recovery performance, with favorable structural Hamming distance across many evaluated settings. These results show that directional evidence can provide an effective preprocessing mechanism for reducing the computational burden of exact DAG learning without requiring a fixed parametric structural~model.}

\keywords{
  Causal discovery, Causal structure learning, Directed acyclic graph, Exact Bayesian network learning, Integer programming
}

\maketitle

\section{Introduction}
As a core task in causal inference, causal discovery aims to recover causal relationships among variables by hypothesizing and constructing possible parent-child links leading to a Directed Acyclic Graph (DAG). Such a task has been found particularly helpful for researchers in selecting appropriate confounders, avoiding bias, and designing effective interventions by identifying true causes rather than mere correlations \citep{Cunningham2021, Williamson2021}. This is found in literature across multiple scientific disciplines, e.g., genomics \citep{pingault2018using}, neuroscience \citep{seth2015granger}, economics \citep{henningsen2025estimating}, and the social sciences \citep{cowls2015causation}, among~other~domains.

In the realm of DAG learning, we can categorize learning algorithms into constraint-based, score-based, and hybrid. Constraint-based algorithms use Conditional Independence (CI) tests to infer the graph that underpins causal or associative interactions among variables \citep{spirtes2000causation,spirtes2010introduction}. These CI-test methods are unreliable in finite samples and therefore require massive datasets, which may be unrealistic in many applications \citep{zhang2017causal}. Score-based algorithms, on the other hand, optimize decomposable score functions such as the Bayesian Information Criterion (BIC) or Akaike Information Criterion (AIC) \citep{bouckaert1995bayesian,gamez2011learning}. Among the vast literature of DAG search algorithms, the Integer Programming (IP), as an exact optimization approach, is a widely used score-based method \citep{bartlett2017integer}. Existing IP-based DAG learning methods primarily focus on identifying the best parent set for each node in the network \citep{manzour2021integer}. Starting without any \emph{a priori} knowledge would make the problem explode in the number of variables, and a complete search becomes impossible. The limitations of score- and constraint-based methods motivate hybrid DAG learning approaches that incorporate features from both classes. A representative example is Max-Min Hill-Climbing (MMHC), which first uses a constraint-based procedure to estimate the graph skeleton, then applies a greedy, score-based search to orient edges and refine the structure \citep{tsamardinos2006max}. Related methods, such as H2PC, show that restricting the search space through CI information can substantially improve scalability and statistical efficiency \citep{gasse2015optimality}. 
However, existing hybrid methods primarily rely on dependency information, while directional information is either underutilized or enforced only locally. They rarely exploit directional evidence as global structural constraints within exact optimization.
As a result, hybrid methods may remain unstable when CI tests are ambiguous, sample sizes are limited, or nonlinear dependence is present \citep{scutari2018learns}. 
They may fail to distinguish noisy local dependencies from structurally persistent directional patterns. Incorporating reliable directional evidence as structural constraints can significantly reduce the feasible orientation space while preserving flexibility for uncertain~edge~directions.

To this end, we present a unified hybrid framework for learning DAGs over both discrete and continuous causal networks, which we refer to as DECO (Directional Evidence-guided Configuration Optimization).
Here, we first extract an undirected skeleton from the observational data using a statistical dependency measure; then we use a divergence-based directional measure to infer reliable orientations and restrict infeasible candidate orientations.  
Finally, an existing exact score-based optimizer searches over the resulting restricted parent-set space to recover the highest-scoring~DAG.

Our key contributions can be summarized as follows.
\begin{itemize}

    \item We introduce a non-parametric directional orientation metric based on maximal conditional divergence. 
    Our metric exploits pairwise distributional asymmetries in both continuous and discrete data, effectively accounting for local shifts in location and scale, to propose edge directions.

    \item We propose a method for converting this directional evidence into admissible parent-set restrictions that can be supplied to any exact parent-set optimizer. 
DECO translates divergence scores into targeted restrictions on parent sets. This yields an exponential reduction in the admissible parent-set configuration space while preserving mathematical flexibility when edge orientations are uncertain.

    \item We provide a theoretical characterization of the resulting search-space reduction showing that a linear reduction in the candidate parent-set size per node yields an exponential reduction in the admissible parent-set configuration space, and quantify how edge-level omission risk compounds with node in-degree and graph size in determining retention of the true parent structure

    


\end{itemize}

\section{Methodology}
In this section, we present the general framework of DECO for causal discovery. Let $V$ denote the set of nodes, or vertices, with $\left|V\right|=D$, and $X\in\mathcal{X}^{N\times D}$ be the observed dataset with $N$ samples. Here, $\mathcal{X}$ denotes the variable domain, which may be continuous or discrete. The ultimate goal is to learn the underlying graph structure from $X$. 
Our proposed methodology consists of (i) extracting structural and directional evidence from the observed data and converting them into an admissible parent-child construction, and (ii) applying existing score-based exact optimization over the restricted space to recover the structure of the highest-scoring DAG. 
We will go into detail on step one in Sections 2.1 and 2.2 and on the optimization procedure in Section~2.3.

\subsection{Directional evidence extraction}
The primary objective of the preprocessing phase is to systematically prune the feasible graph search space by eliminating empirically unsupported relations, thereby reducing the computational burden on the downstream global optimization. 
Hybrid methods such as MMHC \citep{tsamardinos2006max} and H2PC \citep{gasse2014hybrid} reduce the optimization space by estimating an undirected skeleton using conditional independence tests, leaving the directional decisions largely to subsequent optimization.
\cite{rolland2022score} and \cite{sanchez2022diffusion} find topological order using Hessian (or its suitable approximation) of the log-likelihood of the data to iteratively identify and remove leaf nodes and use the topological information as a base before seeking causal relations. However, such methods strictly require differentiable likelihood functions and are largely restricted to parametric models. To overcome these limitations, we introduce a non-parametric framework that operates independently of structural functional forms or differentiability assumptions, and it is applicable to both discrete and continuous data distributions. Our pipeline extracts directional evidence by first isolating plausible dependencies via a skeleton construction phase, ranking candidate edge orientations through local dominance, and subsequently validating them via asymmetry-based directional inference.

First, an undirected graph skeleton is established by evaluating the pairwise Mutual Information (MI) across all variables, which quantifies the reduction in uncertainty of one variable given another \citep{cover1991entropy}. Under standard causal Markov and faithfulness assumptions, statistical independence implies the absence of a structural edge \citep{spirtes2000causation, pearl2009causality}. While \cite{chow1968approximating} utilized an MI-maximizing framework to construct tree structures, we apply a conservative, user-defined threshold $\delta_{\text{MI}}$ to eliminate candidate edges with negligible mutual dependence. To prevent structural fragmentation, a local threshold relaxation is dynamically applied to any node isolated at this screening step, thereby connecting it to a small set of candidate neighbors that share the highest mutual information with the isolated node. This guarantees that every variable retains at least a limited set of admissible parent candidates for subsequent optimization.
Because mutual information is inherently symmetric, the resulting skeleton delineates the undirected constraint space containing all plausible causal interactions without determining edge orientations. We conducted a preliminary sensitivity analysis over a range of the threshold, $\delta_{\text{MI}}$, on representative synthetic graphs. The selected thresholds provided a favorable trade-off between preserving true edges and reducing the search space. Details are provided in Appendix~C.1.

Next, we use an asymmetrical measure to obtain the directional preference of the preserved edges. \cite{duong2022bivariate} proposed a metric, Conditional Divergence based Causal Inference (CDCI), for both discrete and continuous data. CDCI estimates the average divergence between a variable's marginal distribution and its standardized conditional distributions, given the other variable, to infer the causal direction between the two variables. This approach removes any strict assumptions on functional forms between variables and their distribution models, and it supports general divergence metrics. However, CDCI is formulated for pairwise causal directions only and thus is not suitable for determining the complete network structure. In addition, by standardizing the conditional distributions, CDCI suppresses location and scale differences between the marginal and conditional distributions that may be informative about causal direction. Moreover, because CDCI averages divergence over all conditioning values, strong directional effects that occur only for a subset of conditions can be diluted by weaker effects elsewhere.
Motivated by CDCI, we propose a pairwise measure, Maximized Conditional Divergence (MCD), which maximizes the divergence between the marginal and conditional empirical distributions for all ordered pairs. If $(X_u,X_v)$ are two variables from the observational data, we define $MCD$ of $X_v$ given~$X_u$~as
\begin{align}
    MCD\left(v|u\right)&{}=\underset{x\in \mathcal{X}_u}{\max} ~{d\left(P\left(X_v|X_u=x\right), P\left(X_v\right)\right)}     \label{M}
\end{align}
where $d$ is any valid probability distance measure, e.g., total variation, Hellinger distance, etc. It is important to note that these distributions are purely empirical, derived solely from observational data. Access to the actual distributions or even its form is not required to find $MCD$.
The intuition is that when conditioned on a true parent, the conditional distributions of $X_v$ will diverge from its marginal distribution. Therefore, if conditioning on $X_u$ significantly alters the distribution of $X_v$, we interpret it as strong directional evidence supporting $X_u\xrightarrow[]{}X_v$ than for the reverse direction.


$MCD$ aims to identify which variable exhibits the strongest distributional shift for any configuration. This measure is inherently asymmetric, as $MCD\left(v|u\right) \neq MCD\left(u|v\right)$ in general. This aligns with a margin-based view, where larger divergence differences provide stronger evidence for causal direction. However, $MCD$ values are not directly comparable across source nodes due to scale differences. As a result, an edge with relatively low global $MCD$ may still be the strongest outgoing candidate for a given source node. To capture such local dominance, we introduce a row-normalized metric $NMCD$, where
\begin{align}
    NMCD\left(v|u\right)&{}=\begin{cases}
        \frac{MCD\left(v|u\right)}{\underset{w}{\max}MCD\left(w|u\right)} & \text{if } MCD\left(v|u\right)>0\\ 0 & \text{otherwise}
    \end{cases}\label{M_n}
\end{align}

\noindent which detects the strongest relative shift or causal direction from each source node. 
This normalization removes scale differences across source variables. It also places the outgoing directional strengths from each source node on a common scale, allowing local directional dominance to be compared independently of the absolute magnitude of the directional shift. Importantly, ${\underset{w}{\max}MCD}\left(w|u\right)=0$ if and only if $MCD\left(w|u\right)=0$ for all $w$ implying that the marginal distribution of $X_w$ is identical to every conditional distribution $P(X_w|X_u=x)$ which ultimately makes $NMCD\left(v|u\right)=0$ by definition. 
To orient an edge, we convert $MCD$ and $NMCD$ into asymmetrical directional preference scores and compare the two directions of each edge of the skeleton.
We define the absolute directional preference ($S$) and normalized directional preference ($NS$) in the following way,
\begin{align}
    S_{u\xrightarrow[]{}v} &{}= MCD\left(v|u\right) - MCD\left(u|v\right) \label{S}\\
    NS_{u\xrightarrow[]{}v} &{}= NMCD\left(v|u\right) - NMCD\left(u|v\right) \label{NS}
\end{align}


For continuous variables, we discretize each variable using a mean/standard-deviation-based binning scheme, following \cite{duong2022bivariate}, so that MCD and NMCD can be computed from empirical marginal and conditional distributions in the same nonparametric manner as for discrete data. For a variable with sample mean $\mu$ and standard deviation $\sigma$, the central range $\left[\mu-k\sigma,\mu+k\sigma\right]$ is partitioned into $4k + 1$ equal-width bins, and the two outermost bin edges are extended so that values beyond the central range are absorbed into the adjacent extreme bin rather than discarded. We set k = 2, yielding 9 bins per continuous variable. Constant or near-constant columns are assigned a single bin. The discretized data are then processed identically to native discrete data for MI-based skeleton construction and MCD/NMCD-based directional evidence extraction. This avoids introducing additional density-estimation assumptions and keeps the directional-evidence stage consistent across data types.



\subsection{Admissible parent-set construction}

The directional evidence extracted in Section 2.1 is now converted into admissible parent-set constraints. Our edge orientation strategy is based on two principles: First, $NMCD$ ranks candidate outgoing edges from each source node based on their relative local dominance. Second, directional asymmetry can be supported when either normalized or absolute asymmetry exhibits a sufficiently strong preference for one direction over the other. Accordingly, we process candidate outgoing edges from each source node by $NMCD$.
Then along the descending order of rank, we evaluate directional asymmetry using $NS$ and $S$ scores for an edge and its reverse direction.
If there is a clear directional asymmetry such that $NS_{u\xrightarrow[]{}v} \geq \delta_{NS}$ or $S_{u\xrightarrow[]{}v} \geq \delta_{S}$, we confirm the direction from $u\xrightarrow[]{}v$ as long as it does not create a cycle with respect to the current directed graph. Here, $\delta_{NS}$ and $\delta_{S}$ are the two thresholds calibrated to score scales and set as fixed constants in an experiment. Empirical analyses over a range of threshold values are provided in Figure~\ref{heatmap}. $NS$ and $S$ capture local and global directional preference, respectively. We use them as complementary measures, allowing an orientation to be confirmed either by strong local dominance or by a sufficiently large absolute asymmetry. 
Each oriented edge is assigned a confidence score, obtained by normalizing the directional-evidence metric ($NS$ or $S$) associated with its orientation, so stronger directional asymmetry yields higher confidence. Confidently oriented edges forbid their reverse directions, and for detached nodes it serves as a tie-breaking ranking term when selecting fallback parents. For retained edges, confident orientations strictly enforce parent-child roles, while ambiguous edges maintain bidirectional candidates, allowing the Integer Programming to orient them flexibly during global~optimization. 


The resulting reduction in the admissible parent-set configuration space is quantified in the following theorem. We provide a sketch proof of the theorem here; a complete proof appears in Appendix A.


\begin{theorem}[Reduction in Parent-Set Configuration Space]Let $V$ be the set of variables with $|V| = D$. If $\Omega_{full}$ denotes the number of parent-set assignments prior to enforcing global acyclicity constraints, 
$\Omega_{\text{full}} = 2^{D(D-1)}$. Under the proposed DECO preprocessing, let $m_v\le D-1$ denote the number of admissible parents of node $v$ and $\bar{m}=\frac{1}{D}\sum_{v\in V}{m_v}$ be the average candidate parent pool size per node. Then the number of admissible parent-set configurations is $\Omega_{DECO}= 2^{\sum_{v \in V} m_v}= 2^{D\bar{m}}$. Finally, if the average admissible parent pool satisfies $\bar{m} \le (1-\alpha)(D-1)$ for $\alpha \in (0, 1]$, $$\frac{\Omega_{\text{DECO}}}{\Omega_{\text{full}}} \le 2^{-\alpha D(D-1)}$$\end{theorem}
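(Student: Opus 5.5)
The argument is a direct counting argument carried out in three steps, matching the three claims of the theorem. First I will fix what a ``parent-set assignment prior to acyclicity'' means: a function assigning to each node $v\in V$ a subset $\mathrm{Pa}(v)$ of its candidate pool $C_v$, chosen independently across nodes, since the global acyclicity constraint is deliberately not imposed at this stage. Because the choices at different nodes are independent, the total number of assignments is the product over $v$ of the number of subsets of $C_v$, namely $\prod_{v\in V}2^{|C_v|}=2^{\sum_v |C_v|}$.

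For the unrestricted problem, every node may take any of the other $D-1$ variables as a parent, so $|C_v|=D-1$ and $\Omega_{\text{full}}=(2^{D-1})^D=2^{D(D-1)}$. Equivalently, each of the $D(D-1)$ ordered pairs $(u,v)$, $u\neq v$, is either included or excluded as a directed edge. Under DECO, the preprocessing of Sections~2.1--2.2 yields for each $v$ an admissible pool of size $m_v\le D-1$. This pool consists of the skeleton neighbours, possibly augmented by the fallback candidates for isolated nodes, minus those neighbours $u$ for which the orientation $v\to u$ was confirmed, since that confirmation forbids $u\to v$. The same product formula then gives $\Omega_{\text{DECO}}=2^{\sum_v m_v}=2^{D\bar m}$ by the definition of $\bar m$.

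For the ratio, I divide to get $\Omega_{\text{DECO}}/\Omega_{\text{full}}=2^{D\bar m-D(D-1)}=2^{-D\left((D-1)-\bar m\right)}$. The hypothesis $\bar m\le(1-\alpha)(D-1)$ gives $(D-1)-\bar m\ge\alpha(D-1)$. Since $t\mapsto 2^{-Dt}$ is decreasing, the ratio is at most $2^{-\alpha D(D-1)}$. I will also remark that the reduction is linear in the average pool size but exponential in the configuration count, which is the interpretation the paper emphasizes.

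The algebra is routine, so the main point needing care is the modelling step. I must state precisely that both counts are taken over the same product space of per-node parent subsets with acyclicity ignored. Otherwise the comparison is not like-for-like, because acyclicity couples the nodes and breaks the product structure. I also need to justify $m_v\le D-1$, which holds because pools exclude $v$ itself. The same framework covers the case where pools are further capped by a maximum in-degree, since the count then becomes a sum of binomial coefficients bounded by $2^{m_v}$. In that case the stated equality becomes an upper bound, and the ratio bound still holds provided $\Omega_{\text{full}}$ is computed without a cap, as in the statement.
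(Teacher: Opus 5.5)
Your proposal is correct and follows essentially the same route as the paper's proof in Appendix~A. Both count $2^{D-1}$ versus $2^{m_v}$ subsets per node, take the Cartesian product over nodes with acyclicity ignored, and bound the exponent using $\sum_{v\in V} m_v \le D(1-\alpha)(D-1)$. Your extra remarks on how the pools are built and on in-degree caps are not needed for the stated result, but they are accurate.
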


\begin{proof}[Proof Sketch]
    In the unconstrained model, each node may choose any subset of the remaining $D-1$ variables, yielding $2^{D-1}$ candidate parent sets. In the DECO framework, the admissible parent pool of node $v$ is $\mathcal{P}_v$ of size $m_v$. Before imposing global acyclicity constraints, the parent-set choices can be counted through the Cartesian product over nodes. Therefore, $$\Omega_{\text{DECO}} = 2^{\sum_{v \in V} m_v}$$ 
Applying the assumed average pruning fraction $\alpha$ gives the following bound on the ratio of the parent-search configurations:$$\frac{\Omega_{\text{DECO}}}{\Omega_{\text{full}}} = \frac{2^{\sum_{v \in V} m_v}}{2^{D(D-1)}} \le 2^{-\alpha D(D-1)}$$ 
Therefore, a linear reduction in the average admissible parent pool induces an exponential reduction in the candidate parent-set assignments considered by the exact~optimizer.
\end{proof}
This theorem bounds the number of admissible parent-set configurations that the optimizer considers. It does not bound the maximum in-degree of any individual node or the number of valid DAGs. However, reducing the search space is meaningful only if feasible solutions of interest remain accessible. The following proposition establishes this property.

\begin{proposition}[True-Parent Retention under Bounded Omission] For every true directed edge $e$, let the probability that DECO preprocessing excludes its true parent relation be at most $\varepsilon$. Then for a node $v$ with true in-degree $d_v$, the probability that all true parent relations of $v$ remain admissible is $$Pr\left(Pa_G\left(v\right)\subseteq \mathcal{P}_v\right)\geq \max{\left(0,1-d_v\varepsilon\right)}$$
Moreover, for a DAG with $\left|E\right|$ true edges, probability that the true DAG remains feasible after the preprocessing is at least $\max{\left(0, 1-\left|E\right|\varepsilon\right)}$.

Under independent edge-retention events, the two probabilities strengthen to $\left(1-\varepsilon\right)^{d_v}$ and $\left(1-\varepsilon\right)^{\left|E\right|}$ respectively.

\end{proposition}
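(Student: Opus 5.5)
The plan is to use the union bound for the general claims and the product rule for the independent case. For each true directed edge $e=(u\to v)$, let $A_e$ be the event that DECO preprocessing excludes the true parent relation, that is, $u\notin\mathcal{P}_v$. The hypothesis is $\Pr(A_e)\le\varepsilon$ for every $e$.

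First, fix a node $v$ with true parents $Pa_G(v)=\{u_1,\dots,u_{d_v}\}$. The event $\{Pa_G(v)\subseteq\mathcal{P}_v\}$ is exactly the complement of $\bigcup_{i=1}^{d_v}A_{(u_i\to v)}$. Boole's inequality then gives
\[
\Pr\bigl(Pa_G(v)\not\subseteq\mathcal{P}_v\bigr)\le\sum_{i=1}^{d_v}\Pr(A_{(u_i\to v)})\le d_v\varepsilon,
\]
so $\Pr(Pa_G(v)\subseteq\mathcal{P}_v)\ge 1-d_v\varepsilon$. Since probabilities are nonnegative, the bound improves to $\max(0,1-d_v\varepsilon)$. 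This matters when $d_v\varepsilon>1$, where the linear bound is vacuous.

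For the global statement, I first need to pin down what ``the true DAG remains feasible'' means. The true $G$ is a feasible solution of the restricted exact optimizer iff every node's true parent set lies in its admissible pool, i.e. iff $Pa_G(v)\subseteq\mathcal{P}_v$ for all $v\in V$. Acyclicity is automatic because $G$ itself is a DAG. I also take orientation restrictions to be encoded entirely in the pools: forbidding $v\to u$ simply removes $v$ from $\mathcal{P}_u$. This interpretive step is the main obstacle. Once it is settled, feasibility equals the intersection over all $e\in E$ of $A_e^c$, since the sets $Pa_G(v)$ partition $E$ by head node. A second union bound over the $|E|=\sum_v d_v$ edges gives $\Pr(\text{feasible})\ge\max(0,1-|E|\varepsilon)$.

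Finally, assume the retention events $A_e^c$ are mutually independent. The probability of the intersection is then the product $\prod_e\Pr(A_e^c)$, and each factor is at least $1-\varepsilon$. This yields $(1-\varepsilon)^{d_v}$ for the edges into $v$ and $(1-\varepsilon)^{|E|}$ for all edges. To justify calling this a strengthening, I will note that Bernoulli's inequality gives $(1-\varepsilon)^n\ge 1-n\varepsilon$ for $\varepsilon\in[0,1]$, and that $(1-\varepsilon)^n\ge 0$ always. Hence the product bound dominates $\max(0,1-n\varepsilon)$.
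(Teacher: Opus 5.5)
Your proposal is correct and matches the paper's proof: a union bound over the omission events $A_e$ gives the node-level and global bounds, and the product of retention probabilities gives the bounds under independence. Two points the paper leaves implicit are details you add, and neither changes the argument: that feasibility of $G$ amounts to $Pa_G(v)\subseteq\mathcal{P}_v$ for all $v$, with acyclicity automatic, and the Bernoulli-inequality remark showing the product bound is indeed a strengthening (which, like the product step itself, tacitly needs $\varepsilon\le 1$).
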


\begin{proof}

    Let $A_e$ denote the event that true parent relation $e$ is omitted during preprocessing. For node $v$, failure to retain all $d_v$ true parents occurs only if at least one of its true parent relations is omitted. By the union bound, $$Pr\left(Pa_G\left(v\right)\nsubseteq \mathcal{P}_v\right)\le \sum_{e\in Pa_G\left(v\right)}Pr\left(A_e\right)\le d_v\varepsilon$$
which gives the bound.

Under no assumption on dependence among events excluding the true parent relations, the union bound gives the stated lower bounds. If the edge-retention events are additionally independent, then $$Pr\left(Pa_G\left(v\right)\subseteq \mathcal{P}_v\right)= \prod_{e\in Pa_G\left(v\right)}Pr\left(A_e^c\right)\geq \left(1-\varepsilon\right)^{d_v}$$
and similarly, probability of $G$ remaining feasible be bounded by $\left(1-\varepsilon\right)^{\left|E\right|}$. 
The independence assumption is not required for the union-bound result and is stated only to obtain the sharper multiplicative bound.
\end{proof}


Algorithm~\ref{algorithm_maxshift} now summarizes the preprocessing procedure that constructs the admissible parent pools used by the exact optimizer.

\begin{algorithm}[ht]
\SetAlgoLined

 \KwIn{Observational dataset, $X\in\mathcal{X}^{N\times D}$} 
 {       

Compute the mutual information matrix MI, directional scores $MCD$ using Equation~\ref{M}, and normalized scores $NMCD$ using Equation~\ref{M_n}\;

Construct the undirected skeleton: $C = \{ \{u,v\} : MI(u,v) \geq \delta_{MI} \}$ \;

Locally relax threshold for isolated node after MI thresholding to obtain fallback candidate neighbors\;

Initialize set of confirmed directed edges, $E_{dir}=\emptyset$ and set of undirected edges, $E_{und}=C$\;

For each source node $u$, rank candidate edges $u\xrightarrow[]{}v$ by $NMCD$\;
  
\For{each candidate $u\xrightarrow[]{}v$ from the ranked order}
  {
  Compute $S$ and $NS$ using Equation~\ref{S} and~\ref{NS} respectively\;
  \uIf{$NS_{u\xrightarrow[]{}v}\geq\delta_{NS} \lor S_{u\xrightarrow[]{}v}\geq\delta_{S}$ \text{and} $u\xrightarrow[]{}v$ does not create a cycle}{add $u\xrightarrow[]{}v$ to $E_{dir}$ and remove $\{u,v\}$ from $E_{und}$\;}
  \lElse{Keep $\{u,v\}$ in $E_{und}$}
  }

  Build the partially directed graph $G' = \left(V, E_{dir} \cup E_{und}\right)$\;
 }

 Construct an admissible parent set $P_v$ for each node $v$ such that

\begin{itemize}
\item confidently directed edges permit only the identified parent direction
\item ambiguous edges remain feasible in both directions.
\end{itemize}

 

 \KwOut{A partially directed graph, $G'$, and admissible parent pool, $\mathcal{P}_v$ for every $v\in V$}
 \caption{Directional-evidence-guided admissible parent-set construction} \label{algorithm_maxshift}
\end{algorithm}

\subsection{Optimization over restricted parent-set space}

The proposed preprocessing is independent of the optimization framework and can be integrated with any exact parent-set optimizer.

In this work, we adopt the classical integer programming formulation for Bayesian network structure learning introduced in \cite{jaakkola2010learning}, further developed via cutting-plane methods by \cite{cussens2011bayesian} and \cite{bartlett2017integer}, as it is the standard benchmark in exact Bayesian network learning. 


Obtained from Algorithm~\ref{algorithm_maxshift}, $\mathcal{P}_v$ is the set of all possible parent nodes of node $v$, and $\mathcal{Q}_v$ denote the power set of $\mathcal{P}_v$. The empty parent set is always included in $\mathcal{Q}_v$, allowing root nodes to be represented within the optimization model. We define the decision variable $z_{v,\mathcal{Q}}$ to denote whether $\mathcal{Q}\in  \mathcal{Q}_v$ is the true set of parent nodes of $v$ ($z_{v,\mathcal{Q}}=1$) or not ($z_{v,\mathcal{Q}}=0$).

The objective function of IP-based DAG learning models is defined in terms of decomposable scores, enabling independent evaluation of parent sets.
Since the preprocessing stage already constrains the set of feasible parent configurations, we use the standard BIC score as the local objective without additional penalty terms.
The optimization model selects $\mathcal{Q}$ for each node $v$. Formally, we define the score of a node–parent set as
\begin{equation}
    Score\left(v,\mathcal{Q}\right)=BIC\left(v,\mathcal{Q}\right) 
    \label{score}
\end{equation}
The resulting integer program maximizes the combinatorial score across all nodes and their parent-configuration search space, subject to the standard convexity and cluster constraints.
This formulation allows the IP to favor structural evidence while permitting corrections when the likelihood strongly supports alternative configurations. The false-positive edges introduced during MI-based skeleton construction can be removed if they yield poor local BN scores, and ambiguous directional decisions can be resolved by global model fit.

We enforce the constraint $\sum_{\mathcal{Q}\in\mathcal{Q}_v}z_{v,\mathcal{Q}}=1$, $\forall v$, to ensure each node has only one subset of parents selected. A valid DAG should not contain any cycles; however, the initial graph obtained from the IP does not guarantee this requirement. If the initial graph does contain cycles, we eliminate them by adding cycle-elimination constraints
\begin{equation}
    \sum_{v\in C}{\sum_{\substack{
\mathcal{Q}\in\mathcal Q_v\\
\mathcal{Q}\cap C=\emptyset
}} z_{v,\mathcal{Q}}} \geq 1
\label{cycle}\end{equation}
and iteratively resolve the IP until a valid DAG is obtained \citep{cussens2011bayesian}. The IP formulation returns one globally optimal DAG within the admissible parent space. As with other score-based approaches, multiple DAGs may achieve the same optimal score; in that case, the solver returns a single optimal representative. The complete MILP formulation over the DECO-restricted parent sets is provided in Appendix~B.

Figure~\ref{fig:steps} illustrates the complete progression of the proposed DECO framework on the Asia benchmark network described in \cite{lauritzen1988local}. The preprocessing stage first constructs an MI-based dependency skeleton, then estimates directional evidence using the proposed asymmetry measures. The resulting admissible directional relations are then incorporated into the constrained optimization stage to obtain the final~DAG.

\begin{figure}[t]
    \centering
    \includegraphics[width=1\linewidth]{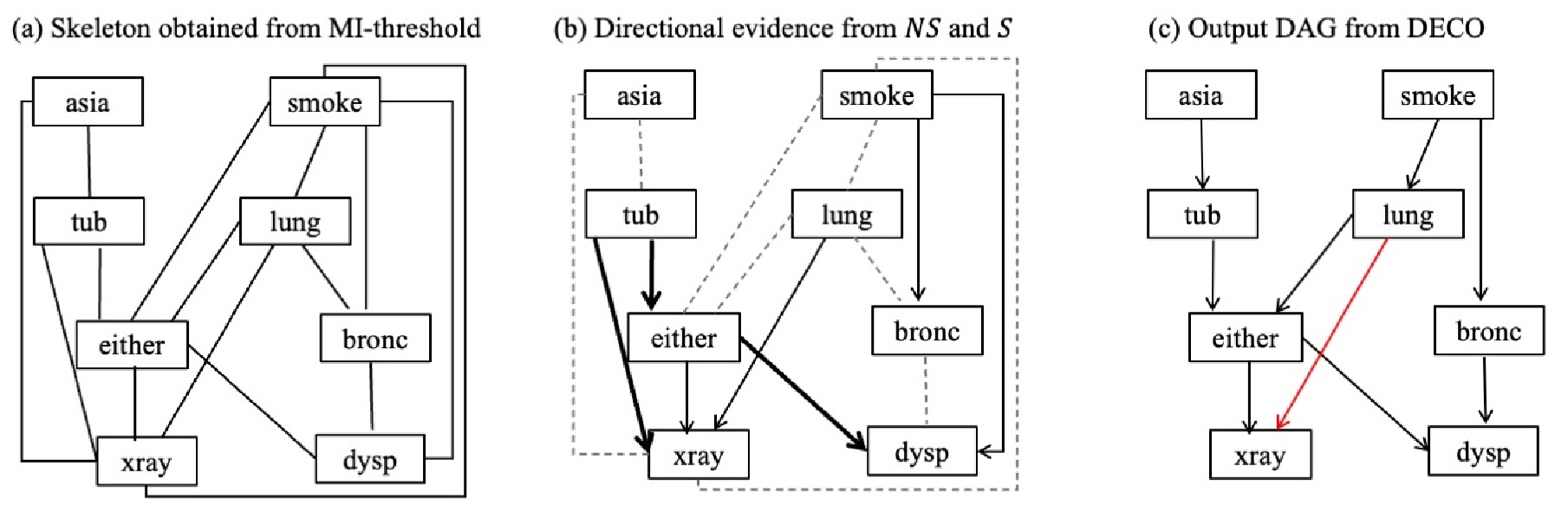}
    \caption{DECO pipeline on the Asia network: (a) Dependency screening from mutual information, (b) Directional evidence extraction and admissible parent-set construction from the proposed asymmetry measures (confidence of these directions is implied through edge thickness, and ambiguous edges are dashed in gray), (c) Final DAG obtained after the exact optimization over the restricted search space (the false positive edge is marked~red).}
    \label{fig:steps}
\end{figure}

Since DECO restricts the admissible parent sets without altering the underlying optimization model, the validity of the cluster constraints and finite termination of the optimization procedure continue to hold. 

\section{Experimental results}

In this section, we present the experimental setup and evaluate DECO's performance at each stage. 
Our evaluation follows the three stages of the proposed framework. We assess the quality of the preprocessing stage by examining (i) skeleton construction accuracy and (ii) the resulting reduction in admissible parent-set search space. Finally, we evaluate the end-to-end DAG recovery performance of DECO on benchmark Bayesian networks from Bayesian Network Repository \citep{scutari2010learning} and synthetic Erd\H{o}s-R\'{e}nyi DAGs. Erd\H{o}s-R\'{e}nyi is a random directed graph parameterized by the number of ordered nodes and probability they are connected with strictly from lower-indexed nodes to higher-indexed nodes to enforce acyclicity \citep{erdds1959random}.  Additionally, we examine the computational cost of the proposed framework and discuss the trade-off between search-space reduction, recovery accuracy, and runtime.

\subsection{Skeleton construction}
To assess the quality of the preprocessing stage, we first evaluate the dependency-screening stage by comparing the recovered skeleton against the true graph skeleton. Skeleton performance is reported using the missing-edge rate (omission) and the extra-edge rate (commission). It is important to choose our priority here wisely. Since the admissible parent sets are constructed directly from the recovered skeleton, a low commission rate will help reduce the search space, but preserving true dependencies at this stage is critical, as omitting true parent relations may exclude the true DAG from the feasible optimization region, as formalized in Proposition~1.
Table~\ref{tab:skeleton} compares the proposed mutual-information (MI) skeleton construction with Maximum Mean Discrepancy (MMD) based dependency screening \citep{gretton2012kernel} and graphical Lasso or GLasso \citep{friedman2008sparse}.

\begin{table}[ht!]    
    \centering
    \caption{Comparison of skeleton quality measured by omission (missing-edge)$=\frac{FN}{TP+FN}$ and commission (extra-edge)$=\frac{FP}{FP+TN}$ rates. Lower omission preserves true parent relations, whereas lower commission yields a smaller admissible search space.}
    \begin{tabular}{|c|c|c|c|}
    \hline
      Network & Method & Omission Rate $\downarrow$ & Commission Rate $\downarrow$\\
      \hline
      \multirow{3}{*}{Asia} & MI & $\mathbf{0.09\pm 0.076}$ & $0.21\pm 0.048$\\
      & MMD & $0.75\pm 0$ & $0.56\pm 0.029$\\
      & G-Lasso & $0.36\pm 0.1020$ & $0.17\pm 0.086$\\
      \hline
      \multirow{3}{*}{Sachs} & MI & $\mathbf{0.01\pm 0.0226}$ & $0.32\pm 
0.008$\\
      & MMD & $0.25\pm 0.0318$	& $0.68\pm 0.017$\\
      & G-Lasso & $0.56\pm 0.0591$ & $0.05\pm 0.022$\\
      \hline
      \multirow{3}{*}{Child} & MI & $\mathbf{0.17\pm 0.0438}$ & $0.14\pm 
0.013$	\\
      & MMD & $0.32\pm0$	& $0.73\pm 0.009$ \\
      & G-Lasso & $0.75\pm 0.02$ & $0.04\pm 0.005$\\
      \hline
    \multirow{3}{*}{Insurance} & MI & $\mathbf{0.34\pm0.0147}$ & $0.21\pm 0.030$\\
    & MMD & $0.41\pm 0.0149$ & $0.80\pm 0.007$\\
    & G-Lasso & - & - \\
    \hline
    \multirow{3}{*}{Water} & MI & $0.63\pm 0.139$	& $0.10\pm 0.004$\\
    & MMD & $\mathbf{0.43\pm 0.0237}$ & $0.75\pm 0.0107$\\
    & G-Lasso & - & - \\
    \hline
    \multirow{3}{*}{Barley} & MI & $0.35\pm 0.0128$ &	$0.17\pm 0.007$\\
    & MMD & $\mathbf{0.32\pm 0.0104}$ & $0.80\pm 0.008$\\
    & G-Lasso & - & - \\
    \hline

    \end{tabular}
    
    \label{tab:skeleton}
\end{table}

Table 1 shows that MI generally provides a favorable balance between omission and commission, with particularly low omission on several benchmark networks while avoiding the very high commission rates observed for MMD.
Since omission errors may eliminate true parent relations and consequently exclude the true DAG from the feasible optimization space (Proposition 1), DECO prioritizes preserving true dependencies even at the cost of retaining a small number of additional candidate edges. Despite this conservative strategy, the preprocessing still substantially reduces the search space, as demonstrated in the following analysis. 

On MI-skeleton, we examine the trade-off between preserving true dependencies and reducing the admissible search space with different values of $\delta_{MI}$. Figure~\ref{trade} shows the results of Erd\H{o}s-R\'{e}nyi DAGs with fixed size of $n=50$ and edge probability $p= 0.05$ while we vary the number of samples, $N$. From Figure~\ref{trade}, one can see that for omission rates up to approximately $0.25$, increasing the threshold produces substantial additional search-space reduction; beyond this region, the marginal gain in reduction becomes small relative to the increase in omission.
Since we want the search space reduction as high as possible without jeopardizing the type II error, we fix $\delta_{MI}=0.10$ for this ER-setting, and it is the same threshold used to produce the results from Table~\ref{tab:skeleton}. 
The qualitative trade-off remained consistent across the evaluated settings, although the preferred operating region varied with graph size and sample availability; additional results are provided in Appendix~C.1.

\begin{figure}[t]
    \centering
    \includegraphics[width=1\linewidth]{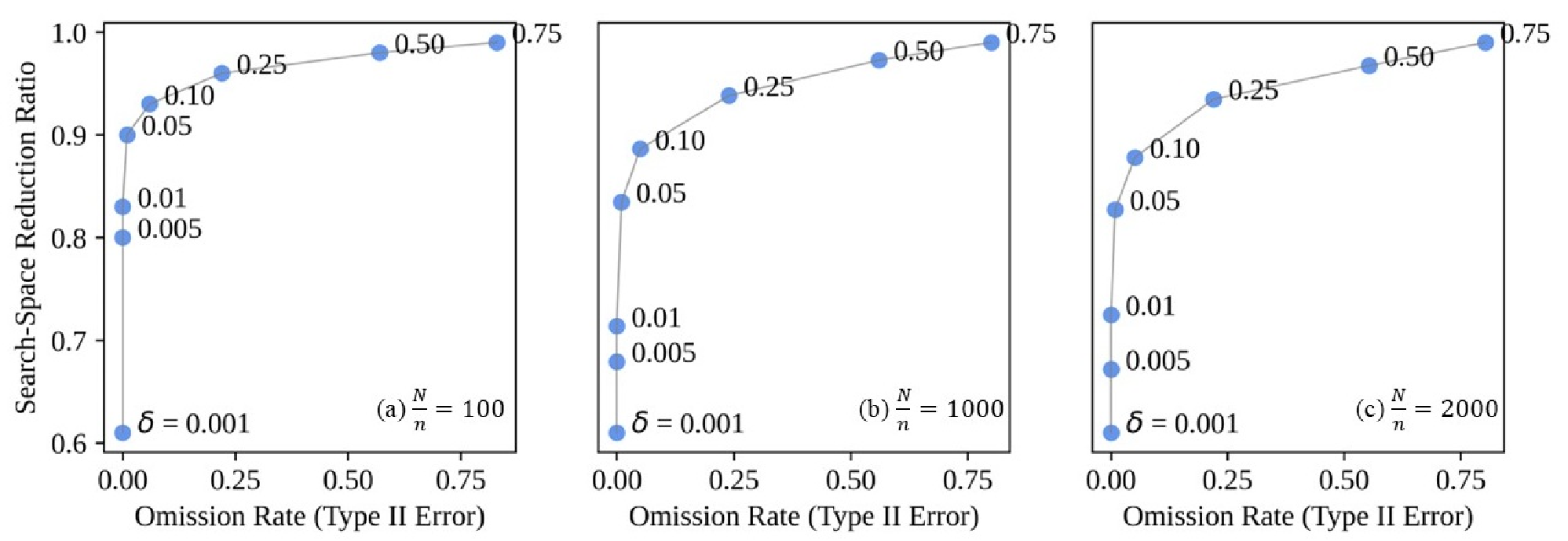}
    \caption{Trade-off between search-space reduction and Type II error illustrating the effect of increasing the MI threshold. Each point corresponds to one threshold value for the ER graph with $n=50, p = 0.05$ and variable $N$.}
    \label{trade}
\end{figure}

To further evaluate the effectiveness of the proposed preprocessing stage, we measure the fraction of admissible parent-set configurations eliminated prior to optimization. Figure~\ref{fig:space reduction} reports the average reduction in search space over $50$ random Erd\H{o}s-R\'{e}nyi DAG realizations of varying graph sizes. 
Results are shown separately for discrete and continuous synthetic data. DECO consistently eliminates a substantial fraction of candidate parent-set configurations, reducing the search space by approximately $50-60\%$ on discrete datasets and $30-55\%$ on continuous datasets. These observations empirically support the theoretical results on search-space reduction. Together, Table~1, Figure~\ref{trade} and Figure~\ref{fig:space reduction} demonstrate that the preprocessing stage achieves a favorable balance between preserving true dependencies and reducing the combinatorial search space.


\begin{figure}[t]
    \centering
    \includegraphics[width=0.5\linewidth]{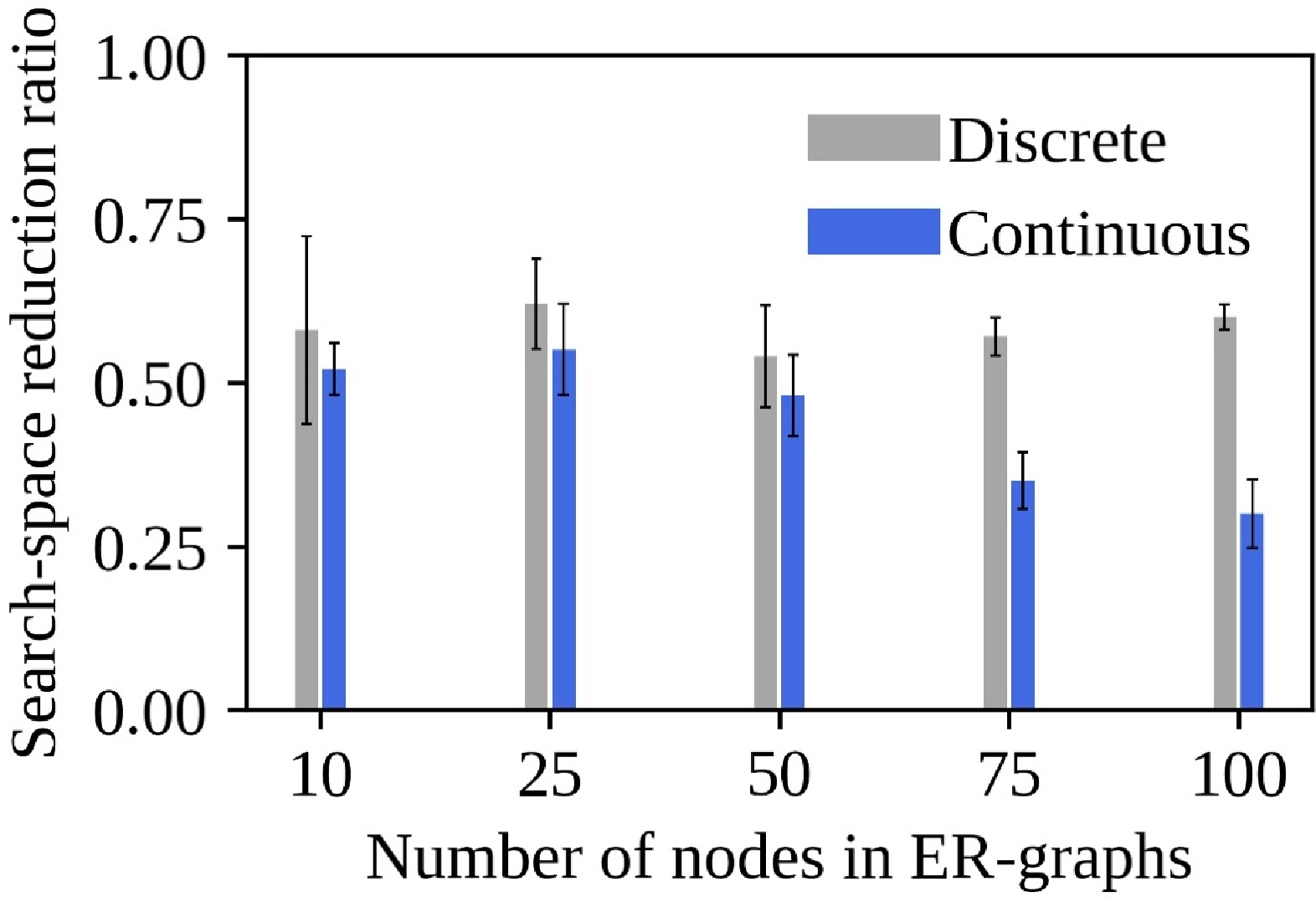}
    \caption{Average search-space reduction achieved by DECO on Erd\H{o}s-R\'{e}nyi DAGs of varying size and edge probability $=0.15$. The reduction ratio is computed as $1-\frac{\Omega_{DECO}}{\Omega_{Full}}$}
    \label{fig:space reduction}
\end{figure}


\subsection{Edge orientation}
Thresholds $\delta_S$ and $\delta_{NS}$ jointly control the degree of confidence required before DECO commits to an edge orientation. Figure~\ref{heatmap} examines this trade-off for the ER-graph with $n=50, p=0.05$, and $N=5,000$. We demonstrate the sensitivity in terms of (a) correct-orientation coverage, measured as the fraction of true edges that are correctly oriented; (b) ambiguity rate, measured as the fraction of true edges left unoriented; and (c) orientation precision, measured as the fraction of committed orientations that are correct.

\begin{figure}[t]
    \centering
    \includegraphics[width=1\linewidth]{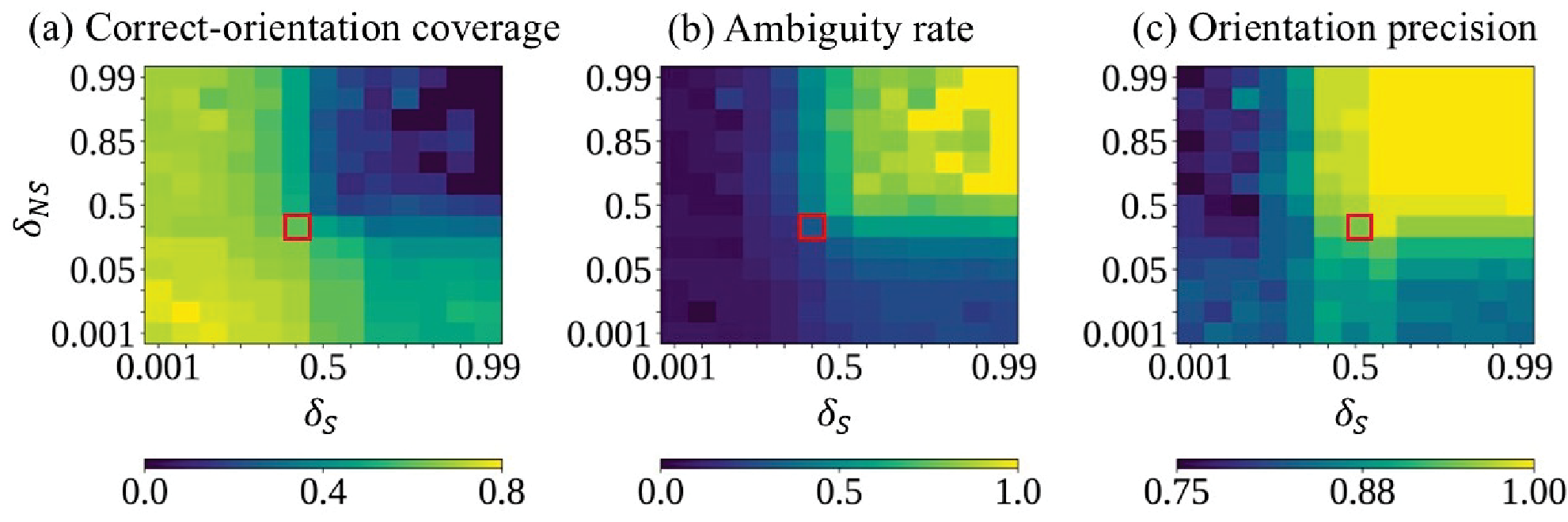}
    \caption{Sensitivity of directional evidence thresholds$\delta_S$ and $\delta_{NS}$. Increasing the thresholds generally improves the reliability of committed orientations at the expense of directional coverage and increased ambiguity. The outlined cell indicates the threshold pair used in the main experiments.}
    \label{heatmap}
\end{figure}

Increasing either threshold generally improves the reliability of committed orientations but reduces directional coverage and increases the number of ambiguous edges. The selected threshold pair lies in an intermediate regime of $\left(\delta_S,\delta_{NS}\right)=\left(0.25, 0.5\right)$ that achieves high orientation precision while retaining sufficient directional information to meaningfully restrict the subsequent optimization.

\subsection{End-to-end DAG recovery}
For end-to-end DAG recovery, we evaluate DECO on both benchmark Bayesian networks and synthetic Erd\H{o}s-R\'{e}nyi DAGs (see Table~2). For the benchmark networks, observation datasets are generated from the corresponding ground-truth Bayesian networks using forward sampling. 
For the synthetic Erd\H{o}s-R\'{e}nyi directed acyclic graphs (ER-DAGs), we sample a random topological ordering of $n$ nodes and add directed edges from earlier to later nodes with probability $p$ for each experiment. This topological sampling ensures that the generated graph is acyclic. Then, discrete datasets are generated using conditional probability tables associated with each node, while continuous datasets are generated from linear or non-linear structural equation models with additive Gaussian noise. 

For each network, we conducted the experiments with sample sizes of $1,000$, $5,000$, and $10,000$, and repeated each experiment over 50 independently generated datasets. 
We compare DECO against PC, GES, and Hill-Climbing (HC) on discrete datasets. For continuous synthetic datasets, we additionally compare against NOTEARS and GOLEM. We implemented PC and GES using \texttt{Causal-Learn} \citep{zheng2024causal} and HC using \texttt{pgmpy} packages \citep{Ankan2024pgmpy} on Python. GOLEM and NOTEARS were implemented using  \texttt{gCastle} package \citep{zhang2021gcastle}. GOBNILP \citep{cussens2020gobnilp} was run using pygobnilp [version 1.0], with the maximum parent-set size of 3, and pygobnilp's default score-dominance pruning under which a candidate parent set is retained only if its local score exceeds that of all its proper subsets. GOBNILP's IP was solved using Gurobi. This configuration constitutes a standard and non-directional pruning baseline against which DECO's performance can be directly compared.
The integer program in DECO was implemented using the PuLP Python library \citep{Mitchell2011PuLPA} and solved with PuLP's default backend, the open-source CBC (COIN-OR Branch and Cut) solver \citep{cbcforrest}, with cluster constraints added iteratively as cycles were detected.

The representative DAGs returned by the implementations were used directly for the metric evaluations. Performance is assessed using Structural Hamming Distance (SHD), False Discovery Rate (FDR), True Positive Rate (TPR), False Positive Rate (FPR), and runtime, and compared against PC, GES, HC, GOBNILP, and continuous-data baselines where applicable.


\begin{table}[ht!]
    \centering
    \caption{Reference networks with their features}
    \begin{tabular}{|c|c|c|c|}
     
    \hline
       Network & Type & Node & Arc/Density  \\
       \hline
       Asia  & BN &  8 & 8  \\
       Sachs  & BN & 11 & 17  \\
       Child &  BN &20 & 25  \\
       Insurance & BN & 27 & 52 \\
       Water & BN & 32 & 66  \\
       Hailfinder & BN & 56 & 66 \\
       ER-25 & Discrete & 25 & 0.15, 0.25\\
       ER-50 & Discrete & 50 & 0.05\\
       ER-25 & Non-linear Gaussian & 25 & 0.05, 0.15\\
       ER-50 & Non-linear Gaussian & 50 & 0.05\\
       ER-25 &  Linear Gaussian & 25 & 0.15\\
       \hline
    \end{tabular}
   
    \label{tab:placeholder}
\end{table}


Figure~\ref{fig:box} compares the SHD of DECO and competing methods on representative datasets. Since the observed SHD values span over a large range across datasets, the y-axis is displayed on a logarithmic scale to facilitate visual comparison. On the benchmark discrete networks (Figure~\ref{fig:box}(a)), 
DECO achieves among the lowest SHD on Asia, Child, Insurance, and Water, while GOBNILP obtains lower SHD on Sachs. 
On the larger Hailfinder network, HC achieves lower SHD, whereas DECO remains substantially better than PC and GES. 
This behavior suggests that the proposed admissible-parent construction is highly effective for structure recovery on small and medium-sized networks, although aggressive pruning may occasionally exclude parent configurations that are beneficial for recovering certain large-scale structures.


\begin{figure} [t!]
    \centering
    \includegraphics[width=0.93\linewidth]{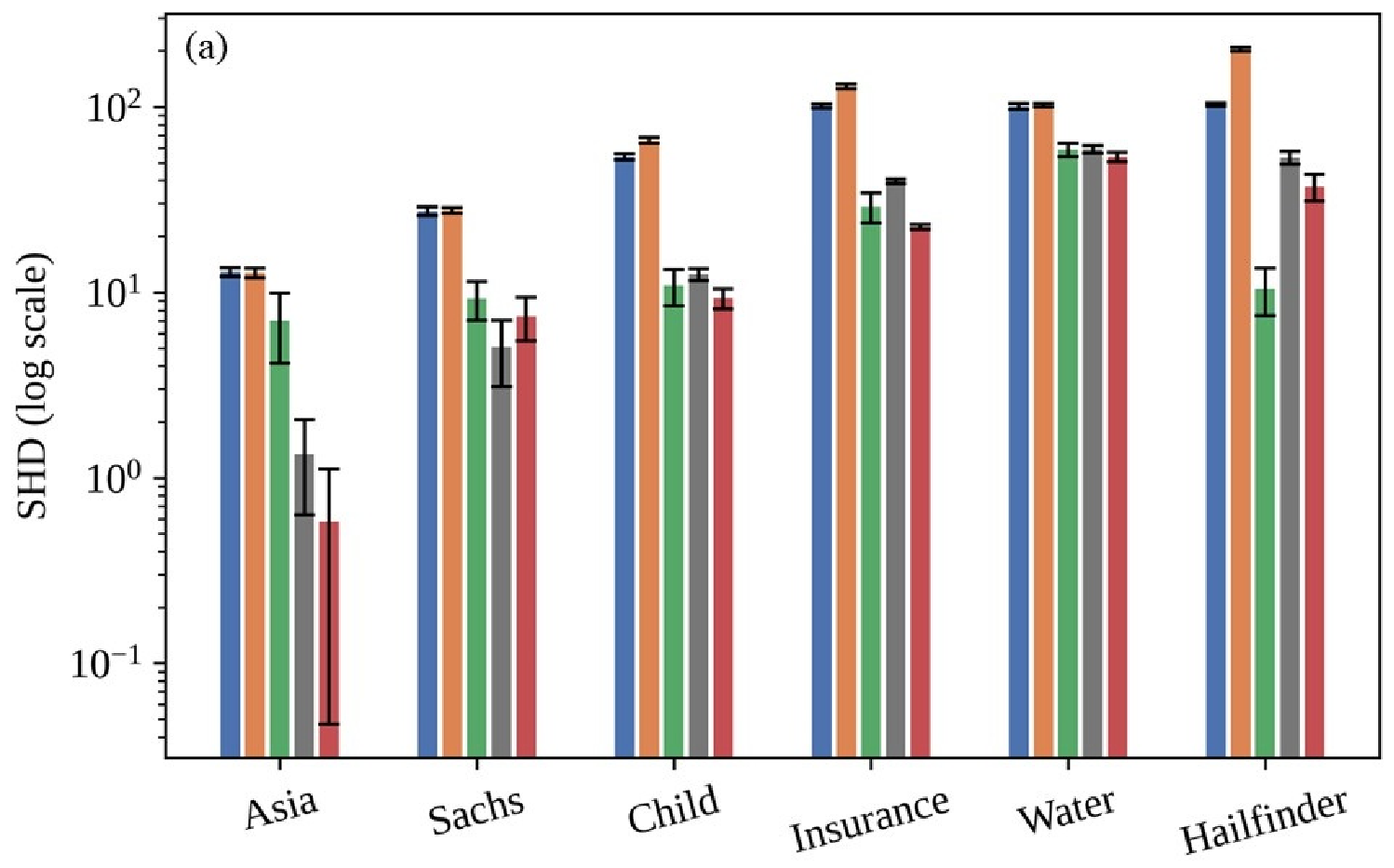}
    \includegraphics[width=0.87\linewidth]{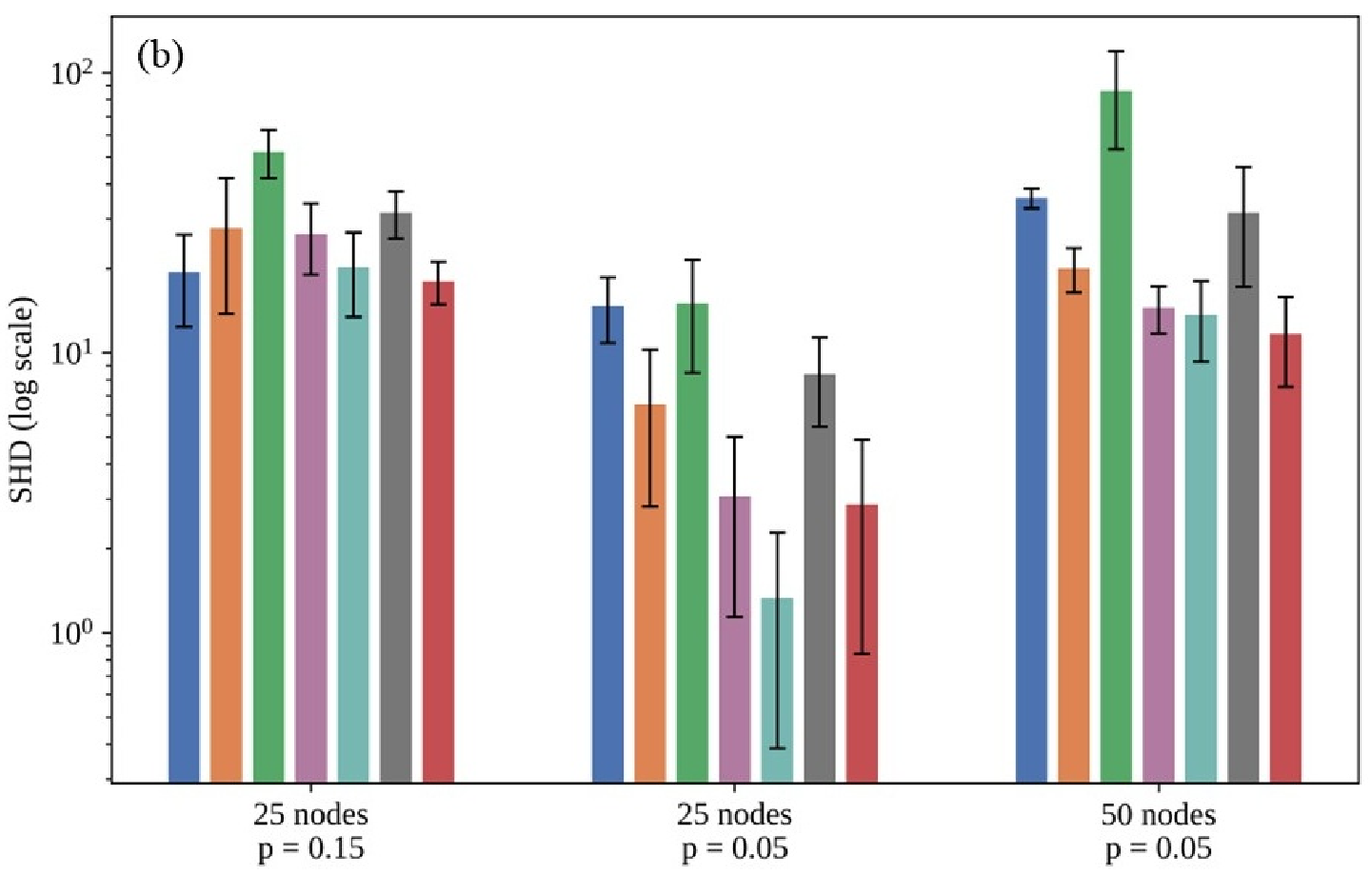}
    \includegraphics[width=1\linewidth]{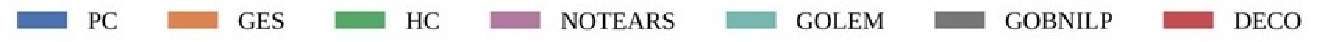}
    \caption{Structural Hamming Distance (SHD) comparison of DECO and other competing methods on (a) benchmark discrete Bayesian networks and (b) nonlinear Gaussian Erd\H{o}s-R\'{e}nyi DAGs, all at $N=10,000$ observations. Error bars denote one standard deviation over repeated runs. Lower values indicate better structure recovery. The y-axis is displayed on a logarithmic scale.}
    \label{fig:box}
\end{figure}

On nonlinear continuous Erd\H{o}s-R\'{e}nyi graphs (Figure~\ref{fig:box}(b)), DECO achieves the lowest SHD in two of the three evaluated settings and remains competitive with NOTEARS and GOLEM in the remaining case. In particular, DECO outperforms all competing methods on the 25-node ($p=0.15$) and 50-node ($p=0.05$) graphs, while GOLEM achieves lower SHD on the sparser 25-node ($p=0.05$) setting.
These results suggest that the proposed directional-evidence constraints can preserve sufficient flexibility for accurate graph recovery while substantially reducing the search space.
Complete numerical results, including FDR, TPR, and FPR, are provided in Appendix~C.2. SHD is used as the primary metric in Figure~\ref{fig:box} because it jointly reflects both missing and spurious edges.

While DECO achieves favorable SHD across the nonlinear continuous settings, this advantage does not translate into uniform dominance across individual edge-recovery metrics. As shown in Appendix~C.2, NOTEARS and GOLEM frequently attain lower FDR and FPR, whereas DECO recovers a larger fraction of true edges in several settings, as reflected by its comparatively higher TPR. For example, for the nonlinear $n=25, p=0.15, N=10,000$ setting, DECO achieves a TPR of $0.68$, compared with $0.55$ for GOLEM and $0.40$ for NOTEARS, while GOLEM and NOTEARS obtain lower FDR. This indicates a precision–recall trade-off: DECO retains greater structural coverage, which can recover additional true relations but may also admit more false-positive relations. Consequently, favorable aggregate SHD does not necessarily imply simultaneous superiority in FDR, TPR, and FPR.

In the linear Gaussian setting, this distinction becomes more pronounced. Both GOLEM and NOTEARS achieve lower SHD than DECO, consistent with their design for continuous structural equation models under assumptions closely aligned with this data-generating setting. Detailed results are provided in Appendix~C.2.


Across most benchmark and synthetic datasets, increasing the number of observations improves DECO's recovery performance. Larger sample sizes yield more reliable estimates of mutual information and directional asymmetry, enabling more accurate admissible-parent construction and improved optimization outcomes. Consequently, SHD and FDR generally decrease while TPR tends to improve as the sample size increases. Although DECO frequently achieves the lowest SHD and FDR, its TPR is not always the highest on larger graphs. This reflects the conservative nature of the preprocessing stage: by aggressively removing implausible parent relations, DECO reduces false discoveries but may occasionally exclude true dependencies, leading to slightly lower recall.

Furthermore, we monitored the computational cost of the optimization stage. On nonlinear Gaussian ER graphs with 50 nodes and 10,000 observations, DECO required approximately $146\pm17$ seconds, comparable to HC ($139\pm206$ seconds) and substantially lower than GOLEM ($1,705\pm100$ seconds). GOBNILP achieved the lowest runtime among the exact optimization approaches in this comparison ($10\pm2$ seconds). This difference should be interpreted in light of the optimization implementations: GOBNILP employs the highly optimized Gurobi solver together with a fixed maximum parent-set size of three and built-in score-dominance pruning, whereas DECO uses the general-purpose open-source CBC solver without imposing a fixed maximum in-degree. Thus, the runtime comparison reflects not only the parent-set reduction strategies but also substantial differences in solver implementation and pruning mechanisms.

The objective of DECO is therefore not to outperform specialized exact Bayesian-network solvers in raw optimization time, but to investigate whether data-driven dependency and directional evidence can reduce the admissible parent-set space while retaining accurate structural recovery. In the evaluated setting, DECO remains computationally practical despite using a general-purpose open-source MILP solver, while often providing favorable structural recovery relative to the competing methods. For example, on the 50-node nonlinear Gaussian ER graphs considered here, DECO achieves lower SHD than GOBNILP despite its higher computational cost. These results highlight a trade-off between computational efficiency and structural recovery and suggest that further improvements in the downstream optimization implementation could complement DECO's preprocessing-based search-space reduction.

\section{Conclusion}

In this paper, we propose DECO, a 
directional evidence-guided framework for exact DAG learning. The preprocessing combines dependency screening and distributional directional evidence to construct admissible parent sets before applying a classical exact optimization framework.
Theoretical analysis showed that the proposed preprocessing stage reduces the admissible parent-set search space exponentially relative to the unconstrained formulation. Experimental results on benchmark Bayesian networks and simulated DAGs demonstrated that DECO substantially reduces the search space while achieving competitive or superior structure recovery performance across a variety of discrete and nonlinear continuous settings. Detailed edge-recovery metrics further reveal a precision–recall trade-off, particularly for continuous data, where greater true-edge coverage can be accompanied by higher false-discovery rates than more conservative competing methods.

One of the main limitations of the current framework is that the directional evidence is derived from pairwise distributional asymmetry and may not fully capture higher-order causal interactions among variables. Consequently, directional decisions currently rely on confidence thresholds that may discard useful information in more complex dependency structures. Second, although the proposed preprocessing significantly reduces the search space, the exact optimization stage can remain computationally demanding for larger graphs. In addition, extending the framework to mixed-type variables and more general structural equation models remains an important direction for future research.

\section*{Statements and Declarations}

\subsection*{Data availability}
The benchmark Bayesian network structures and parameters used in this study (Asia, Sachs, Child, Insurance, Water, Barley, and Hailfinder) were obtained from the bnlearn Bayesian Network Repository, maintained by Marco Scutari, and are publicly available at https://www.bnlearn.com/bnrepository/. The observational datasets used in the benchmark experiments were generated from these networks by forward sampling as described in the manuscript. The synthetic Erdős–Rényi DAGs and corresponding discrete and continuous datasets were generated using the procedures described in Section 3.3. No restricted-access or proprietary datasets were used.

\bibliography{sn-bibliography}

@inproceedings{scutari2018learns,
  title={Who learns better bayesian network structures: Constraint-based, score-based or hybrid algorithms?},
  author={Scutari, Marco and Graafland, Catharina Elisabeth and Guti{\'e}rrez, Jos{\'e} Manuel},
  booktitle={International conference on probabilistic graphical models},
  pages={416--427},
  year={2018},
  organization={PMLR}
}

@article{erdds1959random,
  title={On random graphs I},
  author={Erd{\H{o}}s, P{\'a}l and R{\'e}nyi, Alfr{\'e}d},
  journal={Publ. math. debrecen},
  volume={6},
  number={290-297},
  pages={18},
  year={1959}
}

@article{zheng2024causal,
  title={Causal-learn: Causal discovery in python},
  author={Zheng, Yujia and Huang, Biwei and Chen, Wei and Ramsey, Joseph and Gong, Mingming and Cai, Ruichu and Shimizu, Shohei and Spirtes, Peter and Zhang, Kun},
  journal={Journal of Machine Learning Research},
  volume={25},
  number={60},
  pages={1--8},
  year={2024}
}

@article{gretton2012kernel,
  title={A kernel two-sample test},
  author={Gretton, Arthur and Borgwardt, Karsten M and Rasch, Malte J and Sch{\"o}lkopf, Bernhard and Smola, Alexander},
  journal={The journal of machine learning research},
  volume={13},
  number={1},
  pages={723--773},
  year={2012},
  publisher={JMLR. org}
}

@inproceedings{zhang2017causal,
  title={Causal discovery using regression-based conditional independence tests},
  author={Zhang, Hao and Zhou, Shuigeng and Zhang, Kun and Guan, Jihong},
  booktitle={Proceedings of the AAAI conference on artificial intelligence},
  volume={31},
  number={1},
  year={2017}
}

@inproceedings{gasse2015optimality,
  title={On the optimality of multi-label classification under subset zero-one loss for distributions satisfying the composition property},
  author={Gasse, Maxime and Aussem, Alexandre and Elghazel, Haytham},
  booktitle={International Conference on Machine Learning},
  pages={2531--2539},
  year={2015},
  organization={PMLR}
}

@article{tsamardinos2006max,
  title={The {M}ax-{M}in {H}ill-{C}limbing {B}ayesian network structure learning algorithm},
  author={Tsamardinos, Ioannis and Brown, Laura E and Aliferis, Constantin F},
  journal={Machine learning},
  volume={65},
  number={1},
  pages={31--78},
  year={2006},
  publisher={Springer}
}

@article{cussens2011bayesian,
  author = {Cussens, James},
  title = {Bayesian network learning with cutting planes},
  journal = {UAI 2011, Proceedings of the 27th Conference on Uncertainty in Artificial Intelligence},
  year = {2011},
  url = {https://arxiv.org/pdf/2304.02146}
}

@article{scutari2010learning,
  author = {Scutari, Marco},
  title = {Learning Bayesian Networks with the bnlearn R Package},
  journal = {Journal of Statistical Software},
  volume = {35},
  number = {3},
  pages = {1--22},
  year = {2010},
  url = {https://www.jstatsoft.org/article/view/v035i03/394}
}

@article{gamez2011learning,
  author = {Gámez, José A. and Mateo, Juan L. and Puerta, José M.},
  title = {Learning {B}ayesian networks by hill climbing: efficient methods based on progressive restriction of the neighborhood},
  journal = {Data Mining and Knowledge Discovery},
  volume = {22},
  pages = {106--148},
  year = {2011},
  doi = {10.1007/s10618-010-0178-6}
}

@book{Cunningham2021,
  author = {Cunningham, Scott},
  title = {Causal Inference: The Mixtape},
  publisher = {Yale University Press},
  address = {New Haven},
  year = {2021},
  isbn = {9780300251685},
  url = {https://scunning.com}
}

@inproceedings{cussens2020gobnilp,
  title={GOBNILP: Learning {B}ayesian network structure with integer programming},
  author={Cussens, James},
  booktitle={International Conference on Probabilistic Graphical Models},
  pages={605--608},
  year={2020},
  organization={PMLR}
}

@inproceedings{Mitchell2011PuLPA,
  title        = {Pu{LP}: A Linear Programming Toolkit for Python},
  author       = {Stuart Mitchell and Michael O'Sullivan and Iain Dunning},
  year         = {2011},
  url          = {https://optimization-online.org}
}

@incollection{cbcforrest,
  author    = {John Forrest and Robin Lougee-Heimer},
  title     = {{CBC} User Guide},
  booktitle = {Emerging Theory, Methods, and Applications},
  pages     = {257--277},
  year      = {2005},
  publisher = {INFORMS},
  doi       = {10.1287/educ.1053.0020},
  url       = {https://doi.org/10.1287/educ.1053.0020}
}

@article{cover1991entropy,
  title={Entropy, relative entropy and mutual information},
  author={Cover, Thomas M and Thomas, Joy A and others},
  journal={Elements of information theory},
  volume={2},
  number={1},
  pages={12--13},
  year={1991}
}

@article{gasse2014hybrid,
  title={A hybrid algorithm for Bayesian network structure learning with application to multi-label learning},
  author={Gasse, Maxime and Aussem, Alex and Elghazel, Haytham},
  journal={Expert Systems with Applications},
  volume={41},
  number={15},
  pages={6755--6772},
  year={2014},
  publisher={Elsevier}
}

@article{Williamson2021,
  author = {Williamson, Elizabeth J. and Forbes, Andrew},
  title = {A tutorial on directed acyclic graphs},
  journal = {Journal of Clinical Epidemiology},
  volume = {139},
  pages = {257--265},
  year = {2021},
  doi = {10.1016/j.jclinepi.2021.08.012},
  url = {https://pmc.ncbi.nlm.nih.gov/articles/PMC8821727/}
}

@article{friedman2008sparse,
  title={Sparse inverse covariance estimation with the graphical lasso},
  author={Friedman, Jerome and Hastie, Trevor and Tibshirani, Robert},
  journal={Biostatistics},
  volume={9},
  number={3},
  pages={432--441},
  year={2008},
  publisher={Oxford University Press}
}

@article{zhang2021gcastle,
  title={g{C}astle: A Python Toolbox for Causal Discovery},
  author={Zhang, Keli and Zhu, Shengyu and Kalander, Marcus and Ng, Ignavier and Ye, Junjian and Chen, Zhitang and Pan, Lujia},
  journal={arXiv preprint arXiv:2111.15155},
  year={2021}
}

@article{Ankan2024pgmpy,
  title   = {pgmpy: A Python Toolkit for Bayesian Networks},
  author  = {Ankur Ankan and Johannes Textor},
  journal = {Journal of Machine Learning Research},
  year    = {2024},
  volume  = {25},
  number  = {265},
  pages   = {1--8},
  url     = {https://www.jmlr.org/papers/v25/23-0487.html}
}

@article{pingault2018using,
  title={Using genetic data to strengthen causal inference in observational research},
  author={Pingault, Jean-Baptiste and O’reilly, Paul F and Schoeler, Tabea and Ploubidis, George B and Rijsdijk, Fr{\"u}hling and Dudbridge, Frank},
  journal={Nature Reviews Genetics},
  volume={19},
  number={9},
  pages={566--580},
  year={2018},
  publisher={Nature Publishing Group UK London}
}

@article{seth2015granger,
  title={Granger causality analysis in neuroscience and neuroimaging},
  author={Seth, Anil K and Barrett, Adam B and Barnett, Lionel},
  journal={Journal of Neuroscience},
  volume={35},
  number={8},
  pages={3293--3297},
  year={2015},
  publisher={Society for Neuroscience}
}

@article{henningsen2025estimating,
  title={Estimating causal effects with observational data: guidelines for agricultural and applied economists},
  author={Henningsen, Arne and Low, Guy and Wuepper, David and Dalhaus, Tobias and Storm, Hugo and Belay, Dagim and Hirsch, Stefan},
  journal={Journal of Agricultural Economics},
  year={2025},
  publisher={Wiley Online Library}
}

@book{spirtes2000causation,
  title={Causation, prediction, and search},
  author={Spirtes, Peter and Glymour, Clark N and Scheines, Richard},
  year={2000},
  publisher={MIT press}
}

@article{bartlett2017integer,
  title={Integer {L}inear {P}rogramming for the {B}ayesian network structure learning problem},
  author={Bartlett, Mark and Cussens, James},
  journal={Artificial Intelligence},
  volume={244},
  pages={258--271},
  year={2017},
  publisher={Elsevier}
}

@phdthesis{bouckaert1995bayesian,
  title={Bayesian belief networks: from construction to inference},
  author={Bouckaert, Remco Ronaldus},
  year={1995},
  school={Utrecht University}
}

@article{spirtes2010introduction,
  title={Introduction to causal inference.},
  author={Spirtes, Peter},
  journal={Journal of Machine Learning Research},
  volume={11},
  number={5},
  year={2010}
}

@article{manzour2021integer,
  title={Integer programming for learning directed acyclic graphs from continuous data},
  author={Manzour, Hasan and K{\"u}{\c{c}}{\"u}kyavuz, Simge and Wu, Hao-Hsiang and Shojaie, Ali},
  journal={INFORMS journal on optimization},
  volume={3},
  number={1},
  pages={46--73},
  year={2021},
  publisher={INFORMS}
}

@book{pearl2009causality,
  title={Causality},
  author={Pearl, Judea},
  year={2009},
  publisher={Cambridge university press}
}

@article{cowls2015causation,
  title={Causation, correlation, and big data in social science research},
  author={Cowls, Josh and Schroeder, Ralph},
  journal={Policy \& Internet},
  volume={7},
  number={4},
  pages={447--472},
  year={2015},
  publisher={Wiley Online Library}
}

@inproceedings{duong2022bivariate,
  title={Bivariate causal discovery via conditional divergence},
  author={Duong, Bao and Nguyen, Thin},
  booktitle={Conference on Causal Learning and Reasoning},
  pages={236--252},
  year={2022},
  organization={PMLR}
}

@article{lauritzen1988local,
  title={Local computations with probabilities on graphical structures and their application to expert systems},
  author={Lauritzen, Steffen L and Spiegelhalter, David J},
  journal={Journal of the Royal Statistical Society: Series B (Methodological)},
  volume={50},
  number={2},
  pages={157--194},
  year={1988},
  publisher={Wiley Online Library}
}

@article{sanchez2022diffusion,
  title={Diffusion models for causal discovery via topological ordering},
  author={Sanchez, Pedro and Liu, Xiao and O'Neil, Alison Q and Tsaftaris, Sotirios A},
  journal={arXiv preprint arXiv:2210.06201},
  year={2022}
}

@inproceedings{rolland2022score,
  title={Score matching enables causal discovery of nonlinear additive noise models},
  author={Rolland, Paul and Cevher, Volkan and Kleindessner, Matth{\"a}us and Russell, Chris and Janzing, Dominik and Sch{\"o}lkopf, Bernhard and Locatello, Francesco},
  booktitle={International Conference on Machine Learning},
  pages={18741--18753},
  year={2022},
  organization={PMLR}
}

@article{chow1968approximating,
  title={Approximating discrete probability distributions with dependence trees},
  author={Chow, CKCN and Liu, Cong},
  journal={IEEE transactions on Information Theory},
  volume={14},
  number={3},
  pages={462--467},
  year={1968},
  publisher={IEEE}
}

@inproceedings{jaakkola2010learning,
  title={Learning {B}ayesian network structure using {LP} relaxations},
  author={Jaakkola, Tommi and Sontag, David and Globerson, Amir and Meila, Marina},
  booktitle={Proceedings of the thirteenth international conference on artificial intelligence and statistics},
  pages={358--365},
  year={2010},
  organization={JMLR Workshop and Conference Proceedings}
}

\appendix

\section{Full proof of Theorem 1}



This appendix provides the complete proof of Theorem 1, which characterizes the reduction in admissible parent-set configurations induced by DECO preprocessing.

\begin{proof}In the unconstrained model, each node $v$ can select any subset of the remaining $D-1$ variables. By the binomial theorem, the number of candidate configurations for a single node is $\sum_{k=0}^{D-1} \binom{D-1}{k} = 2^{D-1}$. Across all $D$ nodes, the unrestricted parent-set configuration space is therefore:$$\Omega_{\text{full}} = \left(2^{D-1}\right)^D = 2^{D(D-1)}$$In the DECO framework, the candidate parent pool is restricted to $\mathcal{P}_v$, containing $m_v$ allowable parents. The number of allowable parent sets in $\mathcal{Q}_v$ becomes $\sum_{k=0}^{m_v} \binom{m_v}{k} = 2^{m_v}$. Taking the Cartesian product across all nodes yields an admissible parent-set configuration space prior to global acyclicity enforcement:$$\Omega_{\text{DECO}} = \prod_{v \in V} 2^{m_v} = 2^{\sum_{v \in V} m_v}$$
Under the assumption $\bar{m}\le\left(1-\alpha\right)\left(D-1\right)$, equivalently
$\sum_{v \in V} m_v \le D(1-\alpha)(D-1)$, the ratio of the two configuration spaces satisfies
$$\frac{\Omega_{\text{DECO}}}{\Omega_{\text{full}}} = \frac{2^{\sum_{v \in V} m_v}}{2^{D(D-1)}} \le \frac{2^{D(1-\alpha)(D-1)}}{2^{D(D-1)}} = 2^{-\alpha D(D-1)}$$Thus, a linear reduction in the average admissible parent-pool size yields an exponential reduction in the parent-set configuration space considered prior to global acyclicity enforcement.
\end{proof}

\section{Complete MILP formulation}
For completeness, we provide the integer programming formulation used for downstream exact optimization. This formulation follows the standard parent-set-based Bayesian network structure-learning framework of \citet{jaakkola2010learning}, \citet{cussens2011bayesian}, and \citet{bartlett2017integer}; DECO modifies the admissible parent-set collections supplied to the optimizer.




For each $v\in V$ let  $\mathcal{P}_v$ and $\mathcal{Q}_v$ denote the admissible parent pool constructed by DECO and its power set respectively. For each $Q\in\mathcal{Q}_v$, define binary variables: 

$$z_{v,Q}=\begin{cases}
    1 & \text{if } Q \text{ is selected as the parent set of } v\\
    0 & \text{otherwise}
\end{cases}$$

The restricted parent-set MILP is,

$$\max_{z}\quad 
\sum_{v\in V}\sum_{Q\in\mathcal{Q}_v}
\mathrm{Score}(v,Q)z_{v,Q}$$
subject to
\begin{align*}
    \sum_{Q\in\mathcal{Q}_v}z_{v,Q}=1,&{}\quad \forall v\in V\\
\sum_{v\in C}{\sum_{\substack{
\mathcal{Q}\in\mathcal Q_v\\ \mathcal{Q}\cap C=\emptyset}}}
 z_{v,\mathcal{Q}} \geq 1,&\quad \forall C\subseteq V, \left|C\right|\geq 2\\
 z_{v,\mathcal{Q}}\in \{0,1\}, &\quad\forall v\in V, \mathcal{Q}\in \mathcal{Q}_v.
\end{align*}

The first constraint selects exactly one parent set for each node, while the cluster constraints enforce acyclicity. The violated cluster constraints are added iteratively following the cutting-plane procedure of \citet{cussens2011bayesian}.


%



\section{Additional experimental results}

\subsection{Threshold sensitivity analyses}

The main text (Figure~\ref{trade}) illustrates the effect of the MI threshold, $\delta_{MI}$, for a representative ER-DAG setting. Here, we extend the sensitivity analysis across different graph sizes $n$ and sample-size-to-dimension ratios $N/n$ to assess whether the observed trade-off between skeleton preservation and search-space reduction persists across settings. Figure~6 summarizes the results for discrete ER-DAGs with $p=0.05$.
 

\begin{figure}[ht!]
    \centering
    \includegraphics[width=1\linewidth]{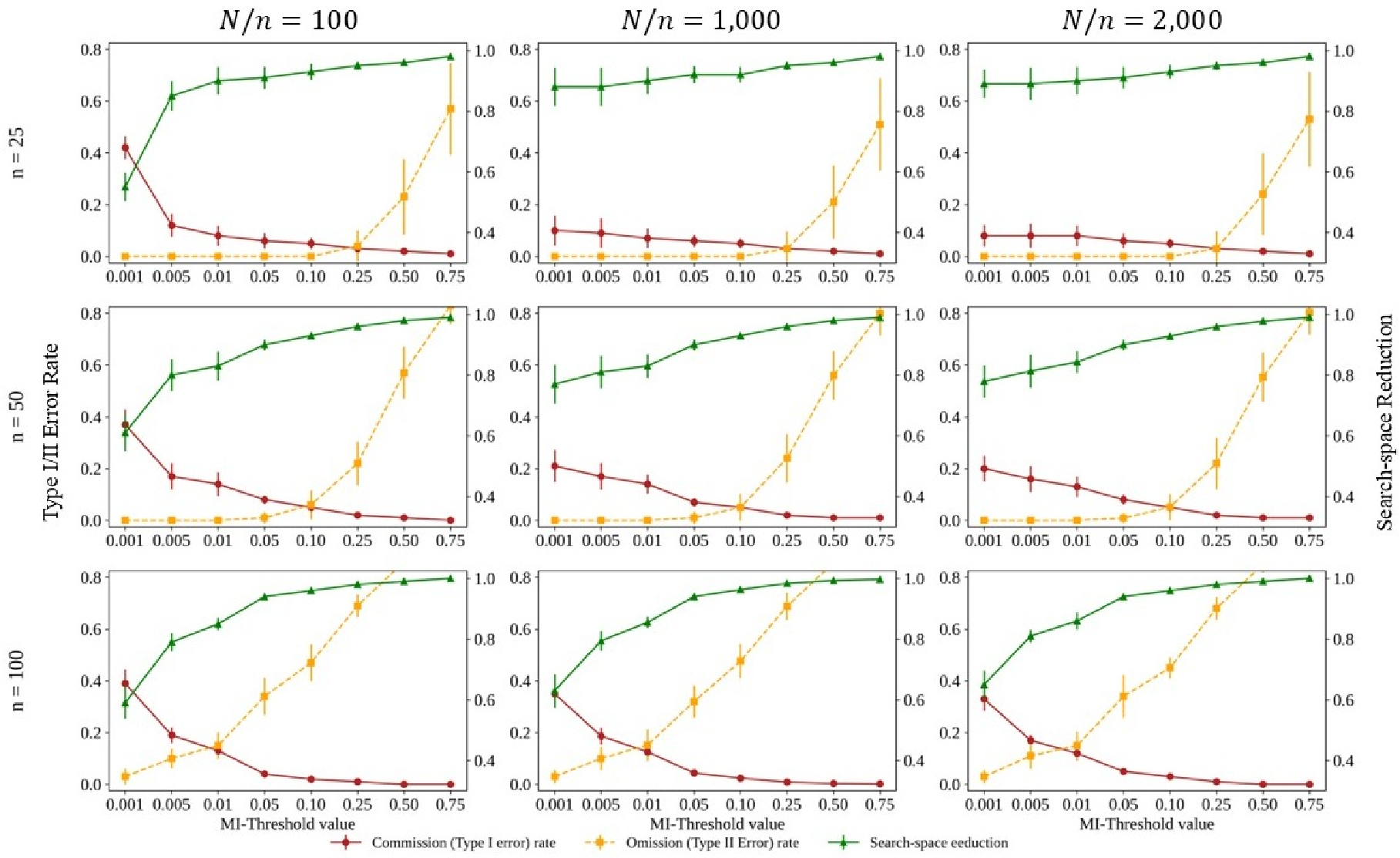}
    \caption{Sensitivity of MI-based dependency screening to $\delta_{MI}$ across graph sizes $n$ and sample size to graph size ratio $N/n$ for discrete Erdős–Rényi DAGs with $p=0.05$. Increasing $\delta_{MI}$ generally decreases commission and increases search-space reduction, but eventually increases omission, illustrating the trade-off between structural preservation and pruning. 
    }
    \label{fig:placeholder}
\end{figure}

\subsection{DAG recovery results}

We compared the SHD of different DAG learning methods, including DECO, in the main text (Figure~\ref{fig:box}). Here, we provide the complete numerical results, including Structural Hamming Distance (SHD), False Discovery Rate (FDR), True Positive Rate (TPR), and False Positive Rate (FPR), across the evaluated discrete and continuous networks and sample sizes.


\setlength{\LTcapwidth}{\textwidth}    
\begin{longtable}{ |c|c|c|c|c|c| }    
    \caption{Comparison of DAG learning performance on benchmark discrete networks} \label{tab:l-xl}

   \\ \hline
    
    Data size & Method & SHD $\downarrow$ & FDR $\downarrow$ & TPR $\uparrow$ & FPR $\downarrow$ \\
    \hline
    \endfirsthead
    \multicolumn{6}{c}{%
    \tablename\ \thetable{} -- continued
}\\

\hline
Data size & Method & SHD $\downarrow$ & FDR $\downarrow$
          & TPR $\uparrow$ & FPR $\downarrow$ \\
\hline

\endhead


\hline
\multicolumn{6}{r}{Continued on next page} \\

\endfoot


\hline

\endlastfoot

\multicolumn{6}{|c|}{Network: Asia (Nodes: 8, Arcs: 8)}\\*
    \hline
    \multirow{5}{*}{1000} & PC & 11.36 $\pm$ 0.975 &  0.88 $\pm$ 0.065 & 0.12 $\pm$ 0.070 &  0.14 $\pm$ 0.033\\*
    & GES & 12.58 $\pm$ 1.002 & 0.89 $\pm$ 0.051 & 0.12 $\pm$ 0.047 &  0.17 $\pm$ 0.028\\*
    & HC & 4.04 $\pm$ 2.191 &  0.35 $\pm$ 0.226 & 0.59 $\pm$ 0.180 &  0.06 $\pm$ 0.040\\
    & GOBNILP & 2.82 $\pm$ 1.506 & 0.24 $\pm$ 0.134 &  0.74 $\pm$ 0.098 & 0.04 $\pm$ 0.026\\
    & DECO & \textbf{1.00 $\pm$ 0.283} & \textbf{0.01 $\pm$ 0.030} &  \textbf{0.88 $\pm$ 0.035} & \textbf{0.00 $\pm$ 0.005}\\
     \hline
     
    \multirow{4}{*}{5000} & PC & 12.24 $\pm$ 0.709 & 0.87 $\pm$ 0.032 & 0.17 $\pm$ 0.060 & 0.18 $\pm $ 0.026\\
    & GES & 12.54 $\pm$ 0.699 & 0.90 $\pm$ 0.019 & 0.13 $\pm$ 0.025 & 0.19 $\pm$  0.028\\
    & HC & 6.64 $\pm$ 2.133 & 0.64 $\pm$ 0.208 & 0.37  $\pm$  0.190 & 0.12 $\pm$  0.043 \\
    & GOBNILP & 2.14$\pm$ 1.327 & 0.19$\pm$ 0.133 & 0.78$\pm$ 0.112 & 0.03$\pm$ 0.026\\
    & DECO & \textbf{0.88 $\pm$ 0.382} & \textbf{0.01 $\pm$ 0.034} & \textbf{0.89 $\pm$ 0.043} & \textbf{0.00 $\pm$ 0.006}\\
    \hline
   \multirow{4}{*}{10000} & PC & 12.86  $\pm$ 0.721 & 0.88 $\pm$ 0.031 &  0.15  $\pm$ 0.053 &    0.19 $\pm$ 0.023\\
    & GES & 12.72 $\pm$ 0.75& 0.90 $\pm$ 0.03 & 0.13 $\pm$ 0.03 & 0.20 $\pm$ 0.02\\
    & HC & 7.02 $\pm$ 2.867 & 0.60 $\pm$ 0.232 & 0.44 $\pm$ 0.200 & 0.12 $\pm$ 0.053\\
    & GOBNILP & 1.34$\pm$ 0.710 & 0.14$\pm$ 0.068 & 0.84$\pm$ 0.078 & 0.02$\pm$ 0.012\\
    & DECO & \textbf{0.58 $\pm$ 0.533} & \textbf{0.02 $\pm$ 0.040} & \textbf{0.93 $\pm$ 0.062} & \textbf{0.00 $\pm$ 0.007}\\
    \hline
   
    \multicolumn{6}{|c|}{Network: Sachs (Nodes: 11, Arcs: 17)}\\
    \hline
    \multirow{4}{*}{1000} & PC & 23.9 $\pm$ 1.723 & 0.81 $\pm$ 0.060 & 0.19 $\pm$ 0.070 &  0.15 $\pm$ 0.022\\*
    & GES & 23.84 $\pm$ 1.332 & 0.88 $\pm$ 0.046 & 0.10 $\pm$ 0.041 &  0.12 $\pm$ 0.017\\
    & HC & 9.94 $\pm$ 2.716 & 0.46 $\pm$ 0.151 & 0.45 $\pm$ 0.141 & 0.07 $\pm$ 0.022\\
    & GOBNILP & \textbf{7.00 $\pm$ 1.709} & \textbf{0.37 $\pm$ 0.101} & \textbf{0.59 $\pm$ 0.101} & \textbf{0.06 $\pm$ 0.017}\\
    & DECO & 9.62 $\pm$  2.244 & 0.49 $\pm$ 0.133 & 0.44 $\pm$  0.122 &    0.08 $\pm$ .019 \\
     \hline

    \multirow{4}{*}{5000} & PC & 26.14  $\pm$ 1.250  & 0.84 $\pm$ 0.050  & 0.23 $\pm$ 0.078 & 0.21 $\pm$ 0.020\\
    & GES & 26.84 $\pm$ 1.255 & 0.88 $\pm$ 0.046 & 0.13 $\pm$ 0.052 & 0.17 $\pm$ 0.0171 \\
    & HC & 8.44 $\pm$ 1.846 & 0.47 $\pm$ 0.094 & 0.55 $\pm$ 0.097 & 0.09 $\pm$ 0.018\\
    & GOBNILP & \textbf{5.02 $\pm$ 1.881} & \textbf{0.3 $\pm$ 0.111} & \textbf{0.7 $\pm$ 0.111} & \textbf{0.05 $\pm$ 0.020}\\
    & DECO & 7.68 $\pm$  1.618 & 0.42 $\pm$ 0.101 & 0.55 $\pm$ 0.095 & 0.07 $\pm$ 0.017\\
    \hline
   \multirow{4}{*}{10000} & PC & 27.38 $\pm$ 1.384 & 0.83 $\pm$ 0.048 & 0.25 $\pm$ 0.077 & 0.22 $\pm$ 0.016\\
    & GES & 27.62 $\pm$ 0.978 & 0.85 $\pm$ 0.035 & 0.18 $\pm$ 0.0489 & 0.19 $\pm$ 0.014\\
    & HC & 9.24  $\pm$ 2.196 & 0.51 $\pm$ 0.120 & 0.51 $\pm$ 0.116 &  0.10 $\pm$ 0.024 \\
    & GOBNILP & \textbf{5.08 $\pm$ 1.978} & \textbf{0.30 $\pm$ 0.116} & \textbf{0.70 $\pm$ 0.116} & \textbf{0.05 $\pm$ 0.021}\\
    & DECO & 7.42 $\pm$ 1.961 & 0.40 $\pm$ 0.123 & 0.56 $\pm$ 0.115 & 0.07 $\pm$ 0.021\\

    \hline

    \multicolumn{6}{|c|}{Network: Child (Nodes: 20, Arcs: 25)}\\
    \hline
    \multirow{4}{*}{1000} & PC & 43.38    $\pm$ 1.742 &  0.89 $\pm$ 0.026 &0.14   $\pm$ 0.032 &  0.08 $\pm$ 0.006\\
    & GES & 48.10    $\pm$ 2.759  &   0.90 $\pm$ 0.034  &  0.15  $\pm$ 0.055   &     0.09$\pm$ 0.010 \\
    & HC & 14.64    $\pm$ 2.696  &  0.42 $\pm$ 0.088  & 0.55   $\pm$ 0.079  &    0.03 $\pm$ 0.007 \\
    & GOBNILP & 17.04 $\pm$ 1.777 & 0.56 $\pm$ 0.070 & 0.44 $\pm$ 0.067 & 0.11$\pm$ 0.004\\
    & DECO & \textbf{11.30 $\pm$ 1.616} & \textbf{0.16 $\pm$ 0.068} & \textbf{0.57 $\pm$ 0.058} & \textbf{0.01 $\pm$ 0.004}\\
    \hline
    \multirow{4}{*}{5000} & PC  & 50.74   $\pm$ 1.968  &  0.91  $\pm$ 0.030 & 0.14    $\pm$ 0.050 & 0.10 $\pm$ 0.007\\
    & GES & 60.12  $\pm$  2.790  &  0.89 $\pm$ 0.023  &0.24   $\pm$ 0.057  &0.14 $\pm$ 0.009\\
    & HC & 9.78  $\pm$ 3.875 &   0.33 $\pm$ 0.118  & \textbf{0.69  $\pm$ 0.097} &  0.03 $\pm$ 0.010 \\
    & GOBNILP & 12.66 $\pm$ 1.012 & 0.31 $\pm$ 0.040 & 0.59 $\pm$ 0.039 & 0.11 $\pm$ 0.003\\
    & DECO & \textbf{9.68 $\pm$ 1.174} & \textbf{0.21 $\pm$ 0.035}  & 0.66 $\pm$ 0.038 & \textbf{0.01 $\pm$ 0.002}\\
    \hline
    \multirow{4}{*}{10000} & PC & 53.72    $\pm$ 1.970 &  0.91  $\pm$ 0.027 &  0.16    $\pm$ 0.050 &   0.11 $\pm$ 0.010 \\
    & GES & 65.88   $\pm$ 2.268 &    0.90 $\pm$ 0.022 & 0.24  $\pm$ 0.060 &    0.15 $\pm$ 0.010\\
    & HC & 10.86   $\pm$ 2.408 &   0.38 $\pm$ 0.069 &  \textbf{0.68  $\pm$ 0.058} &     0.03 $\pm$ 0.006 \\
    & GOBNILP & 12.48 $\pm$ 0.922 & 0.40 $\pm$ 0.037 & 0.50 $\pm$ 0.037 & 0.11 $\pm$ 0.003\\
    & DECO & \textbf{9.3    $\pm$ 1.136} &  \textbf{0.24 $\pm$ 0.032}  & 0.67  $\pm$ 0.045 &   \textbf{0.02 $\pm$ 0.002} \\
    \hline
    \multicolumn{6}{|c|}{Network: Insurance (Nodes: 27, Arcs: 52)}\\
    \hline
    \multirow{4}{*}{10000} & PC & 101.34     $\pm$ 2.909 &    0.86  $\pm$ 0.019 & 0.21 $\pm$ 0.028 &  0.11 $\pm$ 0.005\\
    & GES & 129.00  $\pm$ 3.578 &       0.93   $\pm$  0.020 & 0.13 $\pm$ 0.041 &  0.15 $\pm$  0.007 \\
    & HC & 28.96 $\pm$ 5.352 & 0.36 $\pm$ 0.083 & 0.57 $\pm$ 0.065 & 0.03 $\pm$ 0.007\\
    & GOBNILP & 39.74 $\pm$ 1.180 & 0.46 $\pm$ 0.035 & 0.27 $\pm$ 0.017 & 0.02 $\pm$ 0.001\\
    & DECO & \textbf{22.50 $\pm$ 0.671} & \textbf{0.21 $\pm$ 0.023} & \textbf{0.61 $\pm$ 0.013} & \textbf{0.01 $\pm$ 0.002} \\
    \hline
    \multicolumn{6}{|c|}{Network: Water (Nodes: 32, Arcs: 66)}\\
    \hline
    \multirow{4}{*}{10000} & PC & 100.58 $\pm$ 3.864 & 0.91 $\pm$ 0.012 & 0.02 $\pm$ 0.034 & 0.09$\pm$ 0.004\\*
    & GES & 102.22 $\pm$ 2.250 & 0.96 $\pm$ 0.017 & 0.03 $\pm$ 0.012 & 0.05 $\pm$ 0.003 \\*
    & HC & 58.70 $\pm$ 4.847 & 0.58  $\pm$ 0.102 & 0.22 $\pm$ 0.051 & 0.02$ \pm$ 0.004\\
    & GOBNILP & 59.18 $\pm$ 2.755 & 0.42 $\pm$ 0.064 & 0.24 $\pm$ 0.032 & 0.02 $\pm$ 0.003\\
    & DECO & \textbf{53.66 $\pm$ 3.198} & \textbf{0.46 $\pm$ 0.042} & \textbf{0.38 $\pm$ 0.044} & \textbf{0.02 $\pm$ 0.004} \\
    \hline
    \multicolumn{6}{|c|}{Network: Hailfinder (Nodes: 56, Arcs: 66)}\\
    \hline
    \multirow{4}{*}{1000} & PC & 98.34 $\pm$ 2.055 & 0.97 $\pm$ 0.013 & 0.02 $\pm$ 0.009 & 0.02 $\pm$ 0.001\\
     & GES & 162.14  $\pm$ 6.642 & 0.96 $\pm$ 0.015 &  0.07 $\pm$ 0.024 &  0.04 $\pm$ 0.002\\  
    & HC & \textbf{13.06    $\pm$ 3.295} &  0.37 $\pm$ 0.107 & \textbf{0.59    $\pm$ 0.087} &   0.03 $\pm$ 0.008 \\
    & GOBNILP & 55.86 $\pm$ 3.371 & 0.62 $\pm$ 0.039 & 0.31 $\pm$ 0.032 & \textbf{0.01 $\pm$ 0.001}  \\
    & DECO & 39.95  $\pm$ 4.753 & \textbf{0.35 $\pm$ 0.060} & 0.52 $\pm$ 0.045 & \textbf{0.01 $\pm$ 0.001}  \\
    \hline
    \multirow{4}{*}{5000} & PC & 102.80 $\pm$ 1.876 & 0.97 $\pm$ 0.012 & 0.02 $\pm$ 0.009 & 0.02 $\pm$ 0.001\\
   & GES & 188.63 $\pm$ 6.304 & 0.95 $\pm$ 0.018 & 0.10 $\pm$ 0.037 & 0.05 $\pm$ 0.002\\
    & HC & \textbf{10.24     $\pm$ 3.409} &  0.35 $\pm$ 0.105 & \textbf{0.70  $\pm$ 0.083} &    0.03 $\pm$ 0.010 \\
    & GOBNILP & 53.68 $\pm$ 4.017 & 0.60 $\pm$ 0.043 & 0.34 $\pm$ 0.036 & \textbf{0.01 $\pm$ 0.001}\\
    & DECO & 38.21  $\pm$ 4.270 & \textbf{0.30 $\pm$ 0.044} & 0.57 $\pm$ 0.037 & \textbf{0.01 $\pm$ 0.001}\\
    \hline
    \multirow{4}{*}{10000} & PC & 103.08 $\pm$ 2.048 & 0.97 $\pm$  0.009 & 0.02 $\pm$ 0.007 & 0.02 $\pm$ 0.001\\*
    & GES & 204.6 $\pm$ 5.03 & 0.77 $\pm$ 0.425 & 0.12 $\pm$ 0.026 & 0.05 $\pm$ 0.000\\*
    & HC & \textbf{10.48     $\pm$ 2.988} &  0.37  $\pm$ 0.085 &  \textbf{0.70    $\pm$ 0.069} &    0.03 $\pm$ 0.008\\
    & GOBNILP & 53.28 $\pm$ 4.336& 0.60 $\pm$ 0.044 & 0.33 $\pm$ 0.036 & 0.01 $\pm$ 0.001\\
    & DECO & 37.10 $\pm$ 6.068 & \textbf{0.28 $\pm$ 0.063} & 0.63 $\pm$ 0.095 & \textbf{0.00 $\pm$ 0.001}\\
    \hline
\end{longtable}

\setlength{\LTcapwidth}{\textwidth}  
\begin{longtable}{ |c|c|c|c|c|c| }
    \caption{Comparison of DAG learning performance on Erd\H{o}s-R\'{e}nyi Graphs and discrete data} \label{ER-disc}
   \\ \hline
    

    Data size & Method & SHD $\downarrow$ & FDR $\downarrow$ & TPR $\uparrow$ & FPR $\downarrow$ \\
    \hline
    \endfirsthead
    
\multicolumn{6}{c}{%
    \tablename\ \thetable{} -- continued
}\\

\hline
Data size & Method & SHD $\downarrow$ & FDR $\downarrow$
          & TPR $\uparrow$ & FPR $\downarrow$ \\
\hline

\endhead


\hline
\multicolumn{6}{r}{Continued on next page} \\

\endfoot


\hline

\endlastfoot
    
    \multicolumn{6}{|c|}{Nodes: 25, Edge probability: 0.15}\\
    \hline

      \multirow{4}{*}{1000}   &  PC & 19.38 $\pm$ 5.779 &  0.3  $\pm$ 0.085 &  0.74  $\pm$ 0.114 &  0.03 $\pm$ 0.007\\
         &  GES & 16.28 $\pm$ 5.923 & 0.27 $\pm$ 0.087 & \textbf{0.82 $\pm$ 0.088} & 0.02 $\pm$ 0.009\\
         & HC & 19.82 $\pm$ 6.069 & 0.31 $\pm$ 0.099 & 0.63 $\pm$ 0.092 & 0.02 $\pm$ 0.008\\
         & GOBNILP & 31.80 $\pm$ 5.437 & 0.43 $\pm$ 0.083 & 0.30 $\pm$ 0.054 & \textbf{0.02 $\pm$ 0.003}\\
         & DECO & \textbf{13.96 $\pm$ 5.269}& \textbf{0.23 $\pm$ 0.094} & 0.69 $\pm$ 0.094& 0.02 $\pm$ 0.007 \\
      \hline
      \multirow{4}{*}{5000}   &  PC & 15.08 $\pm$ 6.347 & 0.25 $\pm$ 0.070 & 0.89  $\pm$ 0.087 & 0.02 $\pm$ 0.009\\*
         &  GES & 16.30 $\pm$ 11.988 &  0.28 $\pm$ 0.120 & \textbf{0.91 $\pm$ 0.093} & 0.03 $\pm$ 0.020\\*
         & HC & 16.76 $\pm$ 7.663 & 0.3 $\pm$ 0.121 &  0.74 $\pm$ 0.096 & 0.03 $\pm$ 0.013\\
         & GOBNILP & 32.24 $\pm$ 5.965 & 0.41 $\pm$ 0.082 & 0.30 $\pm$ 0.051 & 0.02 $\pm$ 0.003\\
         & DECO & \textbf{10.54 $\pm$ 4.050} & \textbf{0.20 $\pm$ 0.095} & 0.78 $\pm$ 0.077 & \textbf{0.01 $\pm$ 0.008}\\
      \hline
      \multirow{4}{*}{10000}   &  PC & 15.97 $\pm$ 5.086 & 0.25 $\pm$ 0.055 & 0.9  $\pm$ 0.066 & 0.03 $\pm$ 0.008\\
         &  GES & 18.45 $\pm$ 10.437 & 0.31 $\pm$ 0.095 & \textbf{0.92 $\pm$ 0.070} & 0.04 $\pm$ 0.017\\
         & HC & 13.16 $\pm$ 6.590 &  0.23 $\pm$ 0.107 & 0.79 $\pm$ 0.077 & 0.02 $\pm$ 0.011\\
         & GOBNILP & 34.24 $\pm$ 5.252 & 0.43 $\pm$ 0.102 & 0.28 $\pm$ 0.052 & \textbf{0.02 $\pm$ 0.004}\\
         & DECO & \textbf{12.22 $\pm$ 2.393} & \textbf{0.23 $\pm$ 0.080} & 0.75 $\pm$ 0.039 & 0.02 $\pm$ 0.007\\
      \hline

      \multicolumn{6}{|c|}{Nodes: 25, Edge probability: 0.25}\\
    \hline
      \multirow{4}{*}{1000}   &  PC & 53.11 $\pm$ 8.162 & 0.41 $\pm$ 0.074 & 0.45 $\pm$ 0.062 & 0.05 $\pm$ 0.010\\
      & GES & 53.04 $\pm$ 11.766 & 0.41 $\pm$ 0.111 & \textbf{0.54 $\pm$ 0.098} & 0.06 $\pm$ 0.019\\
      & HC & 47.26 $\pm$ 10.458 & 0.32 $\pm$ 0.107 & 0.42 $\pm$ 0.009 & 0.03 $\pm$ 0.001\\
      & GOBNILP & 59.98 $\pm$ 10.029 & 0.38 $\pm$ 0.073 & 0.20 $\pm$ 0.038 & 0.02 $\pm$ 0.008\\
      & DECO & \textbf{43.78 $\pm$ 6.927} & \textbf{0.24 $\pm$ 0.082} & 0.43 $\pm$ 0.061 & \textbf{0.02 $\pm$ 0.007}\\
      \hline
      \multirow{4}{*}{5000}   &  PC & 44.40 $\pm$ 8.542 &  0.36 $\pm$ 0.068 &  0.62 $\pm$ 0.083 &  0.05 $\pm$ 0.012\\
      & GES & 51.18 $\pm$ 12.912& 0.41 $\pm$ 0.078 &  \textbf{0.71 $\pm$ 0.077} &  0.07 $\pm$ 0.022\\
      & HC & \textbf{35.46  $\pm$ 7.981} & 0.26 $\pm$ 0.094 &  0.58 $\pm$ 0.074 &  0.03 $\pm$ 0.012\\
      & GOBNILP & 58.62 $\pm$ 7.389 & 0.35 $\pm$ 0.100 & 0.21 $\pm$ 0.037 & 0.02 $\pm$ 0.005\\
      & DECO & 43.47  $\pm$ 6.365 &   \textbf{0.26 $\pm$ 0.073} & 0.44 $\pm$ 0.052 & 0.02 $\pm$ 0.006\\
      \hline
      \multirow{4}{*}{10000}   &  PC & 44.16 $\pm$ 9.501 &  0.36 $\pm$ 0.057 &  0.65 $\pm$ 0.078 & 0.05 $\pm$ 0.011\\
      & GES & 59.51 $\pm$ 15.663 &  0.46 $\pm$ 0.093 & \textbf{0.73 $\pm$ 0.090} & 0.09 $\pm$ 0.029\\
      & HC & \textbf{32.65 $\pm$ 9.967} & \textbf{0.25 $\pm$ 0.111} & 0.64 $\pm$ 0.087 & 0.03 $\pm$ 0.015\\
      & GOBNILP & 58.58 $\pm$ 7.787 & 0.36 $\pm$ 0.093 & 0.21 $\pm$ 0.033 & \textbf{0.02 $\pm$ 0.004}\\
      & DECO & 40.5 $\pm$ 6.64 & 0.27 $\pm$ 0.088 & 0.48 $\pm$ 0.088 & 0.02 $\pm$ 0.007\\
      \hline
      \multicolumn{6}{|c|}{Nodes: 50, Edge probability: 0.05}\\*
    \hline
      \multirow{4}{*}{10000} & PC & 28.70 $\pm$ 5.248 & 0.35 $\pm$ 0.070 & 0.90 $\pm$ 0.059 & 0.01 $\pm$ 0.002\\*
      & GES & 15.73 $\pm$ 4.399 & 0.27 $\pm$ 0.058 &  \textbf{0.97 $\pm$ 0.041} &  0.01 $\pm$ 0.002\\*
      & HC & 17.23 $\pm$ 5.976 & 0.26 $\pm$ 0.077 & 0.81 $\pm$ 0.058 & 0.01 $\pm$ 0.003\\*
      & GOBNILP & 17.76 $\pm$ 4.488 & 0.53 $\pm$ 0.080 & 0.48 $\pm$ 0.080 & 0.01 $\pm$ 0.002\\		
      & DECO & \textbf{9.80	$\pm$ 2.280} & \textbf{0.14 $\pm$ 0.030} &	0.86 $\pm$ 0.041 &	\textbf{0.01 $\pm$ 0.001}\\
      \hline
    
\end{longtable}

\setlength{\LTcapwidth}{\textwidth}  
\begin{longtable}{ |c|c|c|c|c|c| }
    \caption{Comparison of DAG learning performance on Erd\H{o}s-R\'{e}nyi Graphs and continuous (Non-linear Gaussian) data}
    \\ \hline
    

    Data size & Method & SHD $\downarrow$ & FDR $\downarrow$ & TPR $\uparrow$ & FPR $\downarrow$ \\
    \hline
    \endfirsthead
    
\multicolumn{6}{c}{%
    \tablename\ \thetable{} -- continued
}\\

\hline
Data size & Method & SHD $\downarrow$ & FDR $\downarrow$
          & TPR $\uparrow$ & FPR $\downarrow$ \\
\hline

\endhead


\hline
\multicolumn{6}{r}{Continued on next page} \\

\endfoot


\hline

\endlastfoot
    
    \multicolumn{6}{|c|}{Nodes: 25, Edge probability: 0.15}\\
    \hline
   \multirow{6}{*}{1000} & PC & 21.53 $\pm$6.492 & 0.34 $\pm$ 0.090 & 0.71 $\pm$ 0.100 & 0.03 $\pm$ 0.009\\
      & GES & 21.90 $\pm$ 10.239 & 0.34 $\pm$ 0.110 & \textbf{0.81 $\pm$ 0.099} & 0.04 $\pm$ 0.015\\
      & HC & 29.10 $\pm$ 6.519 & 0.46 $\pm$ 0.080 & 0.70 $\pm$ 0.081 & 0.05 $\pm$ 0.008\\
      & NOTEARS & 25.27 $\pm$ 3.844 & \textbf{0.00 $\pm$ 0.013} & 0.46 $\pm$ 0.043 & \textbf{0.00 $\pm$ 0.001}\\
      & GOLEM & 21.30 $\pm$ 4.518 & 0.02 $\pm$ 0.026 & 0.54 $\pm$ 0.049 & \textbf{0.00 $\pm$ 0.001} \\
      & GOBNILP & 43.42 $\pm$ 7.236 & 0.28 $\pm$ 0.414 & 0.01 $\pm$ 0.018 & \textbf{0.00 $\pm$ 0.001}\\
      & DECO & \textbf{20.67 $\pm$ 3.682} & 0.40 $\pm$ 0.030 & 0.67 $\pm$ 0.010 & 0.03 $\pm$ 0.007     \\ 
      \hline
      \multirow{6}{*}{5000} & PC & 18.72 $\pm$ 4.054 & 0.34 $\pm$ 0.072 & 0.76 $\pm$ 0.095 & 0.03 $\pm$ 0.007\\*
      & GES & 24.36 $\pm$ 13.328 &  0.39 $\pm$ 0.149 &  \textbf{0.85 $\pm$ 0.125} &  0.05 $\pm$ 0.024\\*
      & HC & 43.82 $\pm$ 13.750 & 0.55 $\pm$ 0.111 &  0.71 $\pm$ 0.160 &  0.08 $\pm$ 0.027\\*
      & NOTEARS & 35.20 $\pm$ 4.284 & 0.09 $\pm$ 0.095 & 0.38  $\pm$ 0.105 & \textbf{0.01  $\pm$ 0.003} \\
      & GOLEM & 25.86  $\pm$ 5.455 & \textbf{0.02  $\pm$ 0.047} & 0.53  $\pm$ 0.089 & 0.01  $\pm$ 0.007 \\
      & GOBNILP & 32.72 $\pm$ 6.690 & 0.33 $\pm$ 0.114 & 0.29 $\pm$ 0.077 & 0.01 $\pm$ 0.004\\
      & DECO & \textbf{19.67 $\pm$ 5.676} & 0.34 $\pm$ 0.089 & 0.65 $\pm$ 0.083 & 0.03 $\pm$ 0.007\\
      \hline
      \multirow{6}{*}{10000} & PC & 19.40 $\pm$ 7.011 & 0.33 $\pm$ 0.078 &  0.78 $\pm$ 0.113 &   0.03 $\pm$ 0.010\\
      & GES & 27.85 $\pm$ 14.115 &  0.40 $\pm$ 0.117 &  \textbf{0.86 $\pm$ 0.090} &   0.05 $\pm$ 0.024\\
      & HC & 52.25 $\pm$ 10.321 & 0.58 $\pm$ 0.059 &   0.80 $\pm$ 0.091 &  0.10 $\pm$ 0.018\\
      & NOTEARS & 26.53 $\pm$ 7.546  &     0.02 $\pm$ 0.031 &  0.40 $\pm$ 0.081 &  0.01 $\pm$  0.005  \\
      & GOLEM & 20.13 $\pm$ 6.710 &   \textbf{0.01   $\pm$ 0.005} &  0.55 $\pm$ 0.075 & \textbf{0.01  $\pm$ 0.003} \\
      & GOBNILP & 31.64 $\pm$ 6.039 & 0.39 $\pm$ 0.095 & 0.31 $\pm$ 0.064 & 0.02 $\pm$ 0.004\\
      & DECO & \textbf{18.00 $\pm$ 3.122} & 0.08 $\pm$ 0.023 & 0.68 $\pm$ 0.051 & 0.03 $\pm$ 0.001\\
      \hline
      \multicolumn{6}{|c|}{Nodes: 25, Edge probability: 0.05}\\
      \hline

      \multirow{6}{*}{10000}& PC & 14.7  $\pm$ 3.864 & 0.55 $\pm$ 0.106 & 0.88 $\pm$ 0.103 & 0.03 $\pm$ 0.008\\
      & GES & 6.54 $\pm$ 3.721 & 0.4 $\pm$ 0.117 & \textbf{0.95 $\pm$ 0.085} & 0.02$ \pm$ 0.008\\
      & HC & 14.98 $\pm$ 6.516 & 0.57 $\pm$ 0.111 & 0.69 $\pm$ 0.126 & 0.03 $\pm$ 0.011\\
      & NOTEARS & 3.07 $\pm$ 1.931 & \textbf{0.00 $\pm$ 0.000} &   0.82 $\pm$ 0.085 & \textbf{0.00 $\pm$ 0.000} \\
      & GOLEM & \textbf{1.33 $\pm$ 0.943} & 0.00  $\pm$ 0.002 & 0.9  $\pm$ 0.073 & \textbf{0.00 $\pm$  0.000} \\
      & GOBNILP & 8.38 $\pm$ 2.939 & 0.47 $\pm$ 0.145 & 0.45 $\pm$ 0.130 & 0.01 $\pm$ 0.004\\
      & DECO & 2.87 $\pm$ 2.029 & 0.2 $\pm$ 0.141 &  0.84 $\pm$ 0.118 & 0.00  $\pm$ 0.003 \\
      \hline
      \multicolumn{6}{|c|}{Nodes: 50, Edge probability: 0.05}\\
    \hline
   \multirow{6}{*}{10000}& PC & 35.67	$\pm$ 2.887 & 0.41	$\pm$ 0.059 & \textbf{0.86 $\pm$ 0.035} & 0.02 $\pm$ 0.006\\*
   & GES & 20.00 $\pm$ 3.606 &	0.29 $\pm$ 0.012 &	0.73	$\pm$ 0.410 & 0.01 $\pm$ 0.001\\
   & HC & 86.33	$\pm$ 33.005 & 0.64 $\pm$ 0.025 &	0.74	$\pm$ 0.059 & 0.04 $\pm$ 0.014\\
   & NOTEARS & 14.50 $\pm$ 2.778 & \textbf{0.00 $\pm$ 0.001} & 0.73 $\pm$ 0.032 & \textbf{0.00 $\pm$ 0.002}\\
   & GOLEM & 13.63  $\pm$ 4.368 &  \textbf{0.00   $\pm$ 0.001} &  0.77 $\pm$ 0.056 & 0.00 $\pm$ 0.004 \\
   & GOBNILP & 31.62 $\pm$ 14.343 & 0.35 $\pm$ 0.156 & 0.34 $\pm$ 0.150 &  0.01 $\pm$ 0.003 \\
   & DECO & \textbf{11.69 $\pm$ 4.134} & 0.05 $\pm$ 0.002 & 0.79 $\pm$ 0.056& \textbf{0.00 $\pm$ 0.002}\\
   \hline
    
\end{longtable}

\setlength{\LTcapwidth}{\textwidth}  
\begin{longtable}{ |c|c|c|c|c|c| }
    \caption{Comparison of DAG learning performance on Erd\H{o}s-R\'{e}nyi Graphs and continuous (Linear Gaussian) data}
    \\ \hline
    

    Data size & Method & SHD $\downarrow$ & FDR $\downarrow$ & TPR $\uparrow$ & FPR $\downarrow$ \\
    \hline
    \endfirsthead
    
\multicolumn{6}{c}{%
    \tablename\ \thetable{} -- continued
}\\

\hline
Data size & Method & SHD $\downarrow$ & FDR $\downarrow$
          & TPR $\uparrow$ & FPR $\downarrow$ \\
\hline

\endhead


\hline
\multicolumn{6}{r}{Continued on next page} \\

\endfoot


\hline

\endlastfoot
    
    \multicolumn{6}{|c|}{Nodes: 25, Edge probability: 0.15}\\
    \hline
\multirow{6}{*}{1000} & PC & 32.33 $\pm$ 10.162 & 0.41 $\pm$ 0.100 & 0.52 $\pm$ 0.140 & 0.03 $\pm$ 0.008\\
      & GES & 33.35 $\pm$ 30.075 & 0.42 $\pm$ 0.198 & 0.83 $\pm$ 0.146 & 0.06 $\pm$ 0.053\\
      & HC & 78.25 $\pm$ 13.139 & 0.71 $\pm$ 0.066 & 0.64 $\pm$ 0.097 & 0.13 $\pm$ 0.033\\
      & NOTEARS & 4.00 $\pm$ 5.172 &  0.02 $\pm$ 0.038 & 0.94   $\pm$ 0.056 & 0.00 $\pm$  0.004 \\
      & GOLEM & \textbf{1.25 $\pm$ 2.222} & \textbf{0.01 $\pm$ 0.022} & \textbf{0.99 $\pm$ 0.027} & \textbf{0.00 $\pm$ 0.002}\\
      & DECO & 25 $\pm$ 3.611 & 0.12 $\pm$ 0.071 &  0.85 $\pm$  0.013 &  0.01 $\pm$ 0.004\\
      \hline
\end{longtable}
\end{document}